%% file: main.tex
\documentclass{article}
\usepackage{calibhyper_arxiv,times}

\usepackage{amsmath,amssymb,amsthm,bm}

\input{math_commands.tex}

\usepackage{graphicx}
\usepackage{booktabs,array}
\usepackage{float}
\usepackage{placeins}
\usepackage{xcolor}
\usepackage{hyperref}
\hypersetup{
  colorlinks=true,linkcolor=black,citecolor=black,urlcolor=black,
  pdftitle={CalibHyper: Chance-Corrected Relational Hypergraphs for Few-Shot Molecular Property Prediction},
  pdfauthor={Linyu Li; Zhi Jin; Yuanpeng He; Dongming Jin; Huanyu Liu; Huanyao Zhang; Haoran Duan; Heng Tian; Gadeng Luosang; Nyima Tashi},
  pdfsubject={Few-shot molecular property prediction},
  pdfkeywords={Molecular property prediction, Few-shot learning, Hypergraphs, Chance correction}
}
\usepackage{url}
\usepackage{etoc}
\definecolor{tocblue}{rgb}{0.0,0.2,0.6}

\newtheorem{proposition}{Proposition}
\newtheorem{lemma}{Lemma}[section]
\theoremstyle{definition}
\newtheorem{definition}{Definition}[section]

\title{CalibHyper: Chance-Corrected\\
Relational Hypergraphs for Few-Shot\\
Molecular Property Prediction}

\author{%
\begin{minipage}{\textwidth}
\centering
Linyu Li\textsuperscript{1,2}\quad
Zhi Jin\textsuperscript{1,2,3}\quad
Yuanpeng He\textsuperscript{1}\quad
Dongming Jin\textsuperscript{1,2}\\[2pt]
Huanyu Liu\textsuperscript{1}\quad
Huanyao Zhang\textsuperscript{1}\quad
Haoran Duan\textsuperscript{3}\quad
Heng Tian\textsuperscript{3}\\[2pt]
Gadeng Luosang\textsuperscript{4}\quad
Nyima Tashi\textsuperscript{4}\endgraf
\vspace{7pt}
{\small
\textsuperscript{1}School of Computer Science, Peking University\\
\textsuperscript{2}Key Laboratory of High Confidence Software Technologies,\\
Peking University, Ministry of Education\\
\textsuperscript{3}School of Computer Science, Wuhan University\\
\textsuperscript{4}School of Information Science and Technology,\\
Tibetan Language Intelligence National Key Laboratory, Tibet University\endgraf
}
\end{minipage}%
}
\date{}
\begin{document}

\maketitle

\begin{abstract}
   Molecular property prediction is central to drug development and materials discovery, but experiments are costly and labeled data are scarce. Context-aware methods use auxiliary assay labels to support few-shot prediction, and recent work supervises property relations with label agreement. However, label agreement is sensitive to class marginals and does not directly capture dependence between properties. We propose CalibHyper, a chance-corrected relational hypergraph method based on the joint label distribution. CalibHyper subtracts an independence baseline from the ordered four-state label distribution and shrinks the residual according to the number of joint observations. A swap-equivariant relation head estimates these residuals, which choose the auxiliary properties for each molecule and set the sign and weight of their hyperedge messages. On thirteen datasets from five benchmarks, in both 1-shot and 10-shot settings, CalibHyper and its ablation settings achieve ROC-AUC competitive with the strongest reported results.
\end{abstract}

\etocdepthtag.toc{mtmain}
\section{Introduction}
\label{sec:1}

Molecular property prediction is a fundamental task in drug development and materials discovery \citep{wu2018,david2020}, but experimental labels are expensive to obtain \citep{wang2025b}. Few-shot methods reuse knowledge learned on existing properties and adapt to an unseen property from a small support set \citep{altaetran2017,guo2021,chen2023,wu2024,li2025}. Labels from existing auxiliary assays can also serve as context for prediction (Figure~\ref{fig:1}A). GS-Meta \citep{zhuang2023} and KRGTS \citep{wang2025a} organize this information with graphs that connect molecules to properties and with relations among tasks, while methods for auxiliary data selection and task weighting consider how well the auxiliary information matches the target task \citep{huang2023,zhong2025}.

ReCoG \citep{wang2026} supervises relation learning with binary label agreement, which mixes the dependence between two properties with their class marginals. When positives are rare, two independent properties still agree on almost every molecule, because most molecules are negative for both (Figure~\ref{fig:1}B): MUV-713 and MUV-733 agree on 99.8718\% of their 3,121 jointly labeled molecules, against 99.8719\% expected under independence (Cohen's $\kappa=-0.0005$). Relation supervision should therefore separate chance agreement, which the marginals explain, from dependence beyond it.

Binary agreement also merges the two ordered disagreement states, target positive with auxiliary negative and the reverse (Figure~\ref{fig:1}C). The association itself can change with molecular structure (Figure~\ref{fig:1}D), and an estimate from few joint observations is less reliable than one from many (Figure~\ref{fig:1}E). In ReCoG, the relation head affects prediction only through an auxiliary loss on shared representations, and a gate weights each auxiliary property once per episode. We instead let relation estimates drive messages that differ from molecule to molecule.

We propose CalibHyper, a chance-corrected relational hypergraph method whose two modules share one relation estimate. C4RC (chance-corrected four-state relation calibration) keeps the four ordered label states, subtracts the independence baseline computed on jointly observed molecules, shrinks the residual according to the number of joint observations, and fits it with a swap-equivariant head. MSRHA (molecule-conditioned signed relational hypergraph adapter) uses the predicted residuals to select auxiliary properties for each molecule and to set the sign and weight of messages along ordered hyperedges of a molecule, the target, and an auxiliary property.

\begin{figure}[t]
\centering
\includegraphics[width=\linewidth]{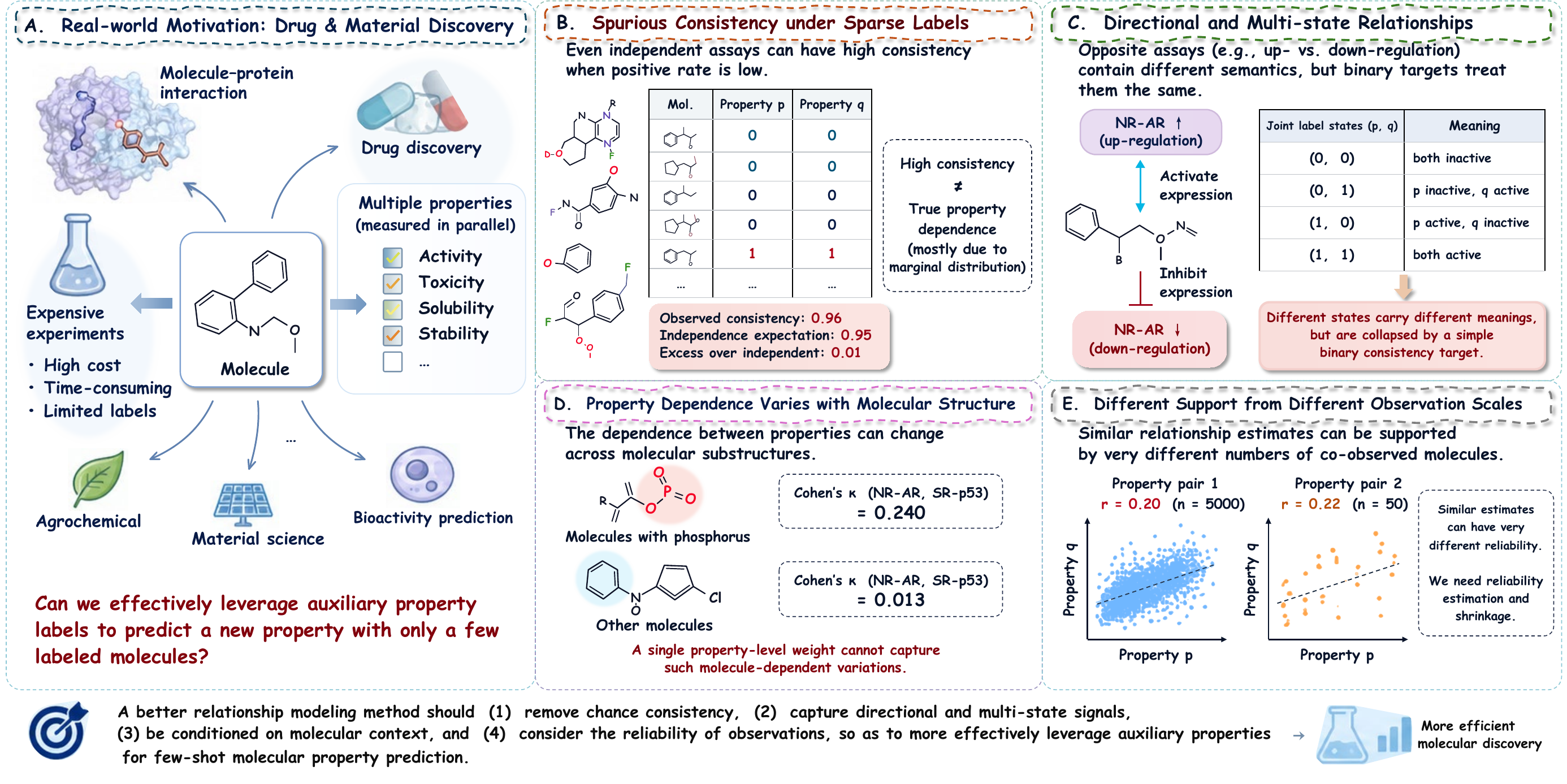}
\caption{Motivation. (A) Few-shot prediction with auxiliary assays. (B) Chance agreement under rare positives. (C) Binary targets merge states $01$ and $10$. (D) Structure-dependent association in Tox21. (E) Similar estimates from unequal numbers of joint observations. B, C, and E are schematic.}
\label{fig:1}
\end{figure}

Our contributions are as follows.
\begin{itemize}
\item Using real molecular label matrices, we show that binary agreement supervision confounds class marginals with property dependence and that swap-invariant relation inputs cannot represent ordered label states.
\item We propose CalibHyper, which couples four-state relation calibration with signed hypergraph propagation so that relation estimates drive molecule-conditioned auxiliary messages. We characterize the calibrated target and prove that the relation head is swap-equivariant and that the adapter is the identity at initialization.
\item On 26 conditions (thirteen datasets, 1-shot and 10-shot), CalibHyper and its ablation settings achieve ROC-AUC competitive with the strongest reported results, with larger margins in the 1-shot setting. Ablations and interventions on auxiliary labels and routing show how each component and the available context affect performance.
\end{itemize}

\section{Related Work}
\label{sec:2}

\paragraph{Few-shot and context-aware molecular property prediction.}
Few-shot methods transfer supervision from existing properties to new ones through shared representations, sample matching, and task adaptation \citep{wang2025b}, as in PAR \citep{wang2021}, ADKF-IFT \citep{chen2023}, PACIA \citep{wu2024}, Pin-Tuning \citep{wang2024}, and UniMatch \citep{li2025}. Multitask models share statistical strength across the property labels of the same molecule \citep{ramsundar2015,mayr2018}, and context-aware methods extend this idea to unseen properties by exploiting context molecules, measurements, or context graphs, as in MHNfs \citep{schimunek2023}, CAMP \citep{fifty2023}, GS-Meta \citep{zhuang2023}, KRGTS \citep{wang2025a}, and CaMol \citep{hoang2026}; graph prompting methods such as OFA \citep{liu2024} organize tasks with prompt graphs. The closest work, ReCoG \citep{wang2026}, supervises property relations with binary agreement and compresses auxiliary property nodes conditioned on the target. CalibHyper replaces binary agreement with chance-corrected four-state residuals and uses them to generate molecule-conditioned messages.

\paragraph{Selecting and compressing auxiliary information.}
MolGroup \citep{huang2023} and AutAuT \citep{zhong2025} select or weight auxiliary information at the dataset and task level. Information bottleneck methods such as CGIB \citep{lee2023}, IGIB-ISE \citep{zhang2025}, and InfoAlign \citep{liu2025a} constrain what a representation retains. We combine auxiliary-node compression with a chance-corrected relation target.

\paragraph{Chance-corrected agreement and higher-order relations.}
Cohen's $\kappa$ \citep{cohen1960} separates observed agreement from the chance agreement implied by the marginals, and under skewed marginals high agreement can coexist with low $\kappa$ \citep{feinstein1990,byrt1993}; we carry this correction into relation supervision as a four-state residual. Hypergraph learning \citep{feng2019,chien2022,feng2025,zhang2026,gao2026}, signed networks \citep{derr2018}, and dynamic message passing \citep{zhang2020} handle higher-order connections, signed relations, and input-dependent neighborhoods. Our hyperedges are ordered triples of a molecule, the target, and an auxiliary property, and one supervised relation score sets their selection, weight, and sign. Appendix~\ref{sec:A} gives an extended review.

\section{Problem Setup and Notation}
\label{sec:3}

Let $\mathcal{M}$ be a set of molecules and $\mathcal{T}$ a set of binary property prediction tasks, split into disjoint meta-training and meta-test sets, $\mathcal{T}_{\text{train}}\cap\mathcal{T}_{\text{test}}=\emptyset$. The label matrix is $Y\in\{0,1,\star\}^{|\mathcal{M}|\times|\mathcal{T}|}$, where $y_{i,p}=1$ means that molecule $m_i$ is active for property $p$, $0$ means inactive, and $\star$ means unmeasured. We write $o_{i,p}=\1[y_{i,p}\neq\star]$ for the observation indicator.

An episode $E_\tau=(S_\tau,Q_\tau)$ targets property $\tau$. The support set $S_\tau$ usually contains $K$ molecules per class, and the query set $Q_\tau$ contains $M$ molecules. The auxiliary set is $A_\tau\subset\mathcal{T}_{\text{train}}\setminus\{\tau\}$ with $N=|A_\tau|$. The context graph $G_\tau$ has molecule nodes $V_\tau^{m}$ from the episode and property nodes $V_\tau^{p}=\{\tau\}\cup A_\tau$; observed labels determine the edge types, and query labels of the target are excluded. Molecule features come from a pretrained GIN \citep{xu2019,hu2020}, $x_i=F_{\text{mol}}(m_i)\in\mathbb{R}^{d}$, and property nodes use learnable embeddings. A two-layer context encoder produces $H=\{h_v\}_{v\in V_\tau}=\mathrm{ContextEncoder}(G_\tau,X)$, and the predictor outputs $\hat y_{i,\tau}=F_{\text{pred}}([\,h'_i\Vert h'_\tau\,])$, where $H'$ are the representations after adaptation ($H'=H$ without an adapter). A context information bottleneck compresses auxiliary property representations conditioned on the target \citep{lee2023,wang2026} and assigns each auxiliary property a gate $\lambda_a\in(0,1)$ through a Gumbel-sigmoid relaxation \citep{jang2017,maddison2017}. Training uses MAML-style bilevel optimization \citep{finn2017}: the inner loop adapts on the support loss, and the outer loop updates shared parameters with the query loss and auxiliary objectives.

The joint label of a property pair $(p,q)$ is one-hot encoded as $e(u,v)\in\{0,1\}^4$ with coordinates $(e_{00},e_{01},e_{10},e_{11})$, where the first digit refers to $p$. The appendix gives the notation, formal definitions, Propositions~\ref{prop:degenerate} to~\ref{prop:identity}, the proofs of all propositions, and further analysis.

\section{Limits of Binary Relation Supervision}
\label{sec:4}

ReCoG \citep{wang2026} regresses the disagreement indicator $y^{(i,p,q)}_{\text{rel}}=(y_{i,p}-y_{i,q})^2\in\{0,1\}$ of each property pair with an MSE loss.

\paragraph{Chance agreement.}
Let $\pi_p$ and $\pi_q$ be the positive rates of $p$ and $q$ on their jointly observed molecules, $a_{pq}$ their observed agreement, and $c_{pq}=(1-\pi_p)(1-\pi_q)+\pi_p\pi_q$ the agreement expected under independence.

\begin{proposition}[Chance decomposition]
\label{prop:chance}
If $c_{pq}<1$, then $\mathbb{E}[y_{\text{rel}}]=1-a_{pq}=(1-c_{pq})(1-\kappa_{pq})$, where $\kappa_{pq}=(a_{pq}-c_{pq})/(1-c_{pq})$ is Cohen's $\kappa$. With $s_{i,p}=1-2y_{i,p}$, the four-state indicators satisfy
\begin{equation}
\label{eq:contrast}
(e_{00}+e_{11})-(e_{01}+e_{10})=s_{i,p}\,s_{i,q},
\end{equation}
whose mean exceeds its independence value $(1-2\pi_p)(1-2\pi_q)$ by $4\,\mathrm{Cov}(y_p,y_q)=2(a_{pq}-c_{pq})$.
\end{proposition}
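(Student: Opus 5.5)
The plan is to treat every quantity as an average over the jointly observed molecules of the pair $(p,q)$, so that all expectations are empirical means over that set. Write $P_{uv}=\mathbb{E}[e_{uv}]$ for the four joint frequencies. Then $\pi_p=P_{10}+P_{11}$ and $\pi_q=P_{01}+P_{11}$, and the observed agreement is $a_{pq}=P_{00}+P_{11}$. With these definitions the argument needs no probabilistic assumptions: each claim is an identity among $P_{uv}$, $\pi_p$ and $\pi_q$.

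\textbf{Step 1: the chance decomposition.} Because $y_{i,p},y_{i,q}\in\{0,1\}$, the disagreement indicator $(y_{i,p}-y_{i,q})^2$ equals $e_{01}+e_{10}$. Taking the mean gives $\mathbb{E}[y_{\text{rel}}]=P_{01}+P_{10}=1-a_{pq}$. The hypothesis $c_{pq}<1$ makes $\kappa_{pq}$ well defined. Expanding the definition gives
\[
(1-c_{pq})(1-\kappa_{pq})=(1-c_{pq})-(a_{pq}-c_{pq})=1-a_{pq},
\]
which completes the first claim.

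\textbf{Step 2: the contrast identity.} I would verify \eqref{eq:contrast} by enumerating the four states $(y_{i,p},y_{i,q})$. Since $s=1-2y$ takes the values $\pm1$, the product $s_{i,p}s_{i,q}$ is $+1$ exactly when the two labels agree and $-1$ exactly when they disagree. The left-hand side takes the same values in each of the four cases.

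\textbf{Step 3: the covariance statement.} Expanding the product gives $\mathbb{E}[s_ps_q]=1-2\pi_p-2\pi_q+4\mathbb{E}[y_py_q]$. The independence value is obtained by replacing $\mathbb{E}[y_py_q]$ with $\pi_p\pi_q$, which yields $(1-2\pi_p)(1-2\pi_q)$. The difference between the two is therefore $4(\mathbb{E}[y_py_q]-\pi_p\pi_q)=4\,\mathrm{Cov}(y_p,y_q)$, where the covariance is the empirical one on the joint set.

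To link this to $a_{pq}-c_{pq}$, note first that $\mathbb{E}[s_ps_q]=a_{pq}-(1-a_{pq})=2a_{pq}-1$. Likewise $(1-2\pi_p)(1-2\pi_q)=2c_{pq}-1$, since $c_{pq}$ is exactly the agreement probability of independent labels with these marginals. The difference is then $2(a_{pq}-c_{pq})$, which matches the claimed value.

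\textbf{Main obstacle.} The only real care point is consistency of conventions. The marginals $\pi_p,\pi_q$, the agreement $a_{pq}$ and the covariance must all be computed on the same set of jointly observed molecules, and the covariance must be the population-style empirical one, not the unbiased $n-1$ version. I would state this explicitly at the outset. Once that is fixed, every step is a finite algebraic identity.
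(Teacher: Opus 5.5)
Your proof is correct and follows essentially the same route as the paper's proof: both reduce every claim to finite identities among the joint frequency $\mathbb{E}[y_py_q]$ and the marginals on the joint observation set, which give $\mathbb{E}[y_{\text{rel}}]=1-a_{pq}$ and $a_{pq}-c_{pq}=2\,\mathrm{Cov}(y_p,y_q)$. The only differences are cosmetic. You check the contrast identity by enumerating the four states and compute $\mathbb{E}[s_ps_q]=2a_{pq}-1$ and $(1-2\pi_p)(1-2\pi_q)=2c_{pq}-1$ explicitly, whereas the paper reads the identity off its Walsh decomposition (Lemma~\ref{lem:D.1}) and uses $\mathrm{Cov}(1-2y_p,1-2y_q)=4\,\mathrm{Cov}(y_p,y_q)$.
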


When positives are rare, $c_{pq}\approx1$, so dependence moves the binary target only in proportion to $1-c_{pq}$. For MUV-713 and MUV-733, $1-c_{pq}=0.0013$, so each unit of $\kappa_{pq}$ changes the expected binary target by only 0.0013. Figure~\ref{fig:2} shows that this regime is common. The mean $\kappa$ is 0.010 on MUV and 0.032 on PCBA, where joint negatives make up on average more than 99.99\% and 99.8\% of agreeing label pairs. Negative dependence is also frequent: 20.4\%, 8.7\%, and 15.6\% of meta-training pairs have $\kappa<-0.02$ in the ToxCast subsets NVS, BSK, and CEETOX. Evidence is uneven as well. The median number of joint observations is 235{,}660 on PCBA but 22 on NVS, and two pairs with $\hat\kappa\approx0.22$ differ by about 1{,}200-fold in their sensitivity to a single relabeled molecule (Figure~\ref{fig:2}f).

\begin{figure}[t]
\centering
\includegraphics[width=\linewidth]{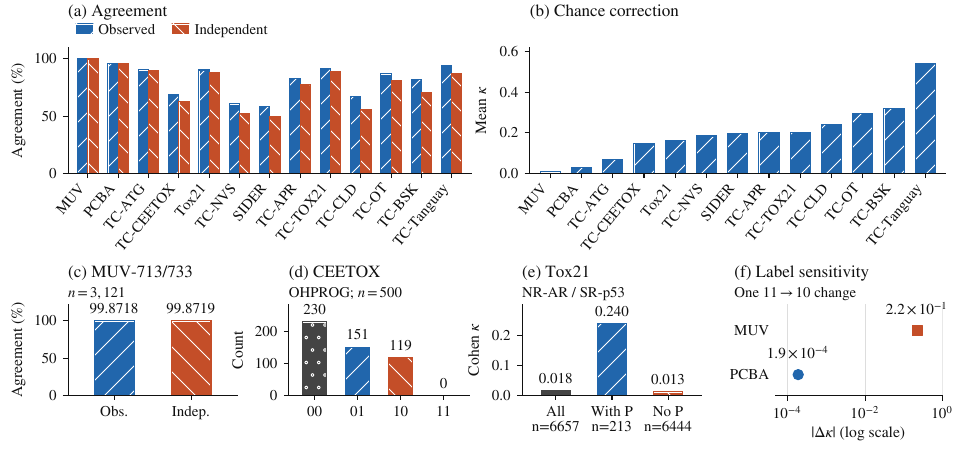}
\caption{Label statistics (Appendix~\ref{sec:H}). (a) Observed and independence agreement and (b) mean Cohen's $\kappa$ per dataset (TC: ToxCast). (c) MUV-713 and MUV-733. (d) A CEETOX increase and decrease pair. (e) $\kappa$ with and without phosphorus. (f) $|\Delta\kappa|$ after one $11\to10$ relabeling.}
\label{fig:2}
\end{figure}

\paragraph{Swap-invariant relation input.}
ReCoG's relation head reads $u=[h_i\Vert h_p]\odot[h_i\Vert h_q]=[h_i\odot h_i\Vert h_p\odot h_q]$. The first half ignores the property pair, the second half ignores the molecule, and $u$ is unchanged when $p$ and $q$ are swapped. A swap-equivariant four-state output computed from $u$ must therefore assign equal values to the states $01$ and $10$, and its squared error on an ordered target is at least $\frac12(r_{01}-r_{10})^2$ (Proposition~\ref{prop:degenerate}). The two states do differ in practice: for the CEETOX assays in Figure~\ref{fig:2}d they contain 151 and 119 molecules, which the binary label merges.

\section{Method}
\label{sec:5}

CalibHyper has two modules (Figure~\ref{fig:3}). C4RC learns reliability-weighted four-state residuals with a swap-equivariant head $F_{\text{rel}}$ (Section~\ref{sec:5.1}), and MSRHA turns them into molecule-conditioned signed hyperedge messages that update the representations used for prediction (Section~\ref{sec:5.2}). The backbone follows Section~\ref{sec:3}.

\begin{figure}[t]
\centering
\includegraphics[width=\linewidth]{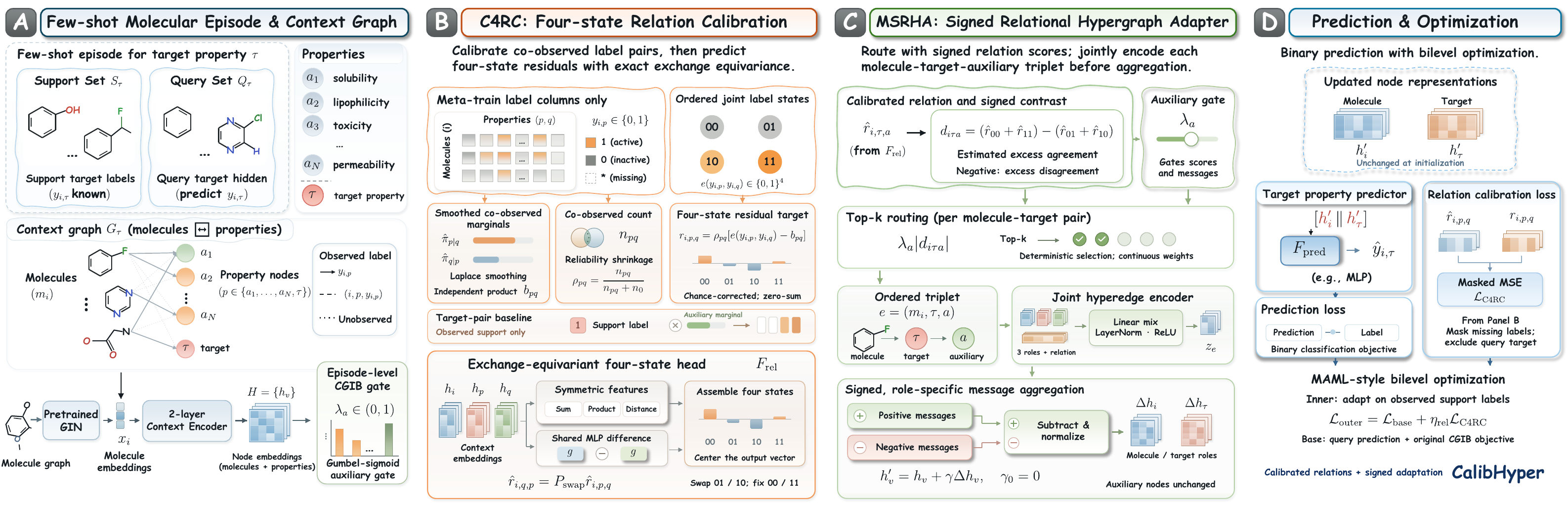}
\caption{Architecture. (A) Context graph and encoder. (B) C4RC: chance-corrected four-state relation head. (C) MSRHA: signed messages on top-$k$ hyperedges. (D) Prediction and training.}
\label{fig:3}
\end{figure}

\subsection{C4RC: Chance-Corrected Four-State Relation Calibration}
\label{sec:5.1}

\paragraph{Independence baseline.}
For an ordered pair $(p,q)$ of meta-training properties, let $I_{pq}=\{i: o_{i,p}=o_{i,q}=1\}$, $n_{pq}=|I_{pq}|$, and $n^+_{p|q}=\sum_{i\in I_{pq}}y_{i,p}$. With a smoothing constant $\alpha$ (1 by default), the positive rate of $p$ on the jointly observed molecules is $\hat\pi_{p|q}=(n^{+}_{p|q}+\alpha)/(n_{pq}+2\alpha)$, and $\hat\pi_{q|p}$ is defined symmetrically. Two independent properties with these marginals have the four-state distribution
\begin{equation*}
b_{pq}=\Big[(1-\hat\pi_{p|q})(1-\hat\pi_{q|p}),\;
(1-\hat\pi_{p|q})\hat\pi_{q|p},\;
\hat\pi_{p|q}(1-\hat\pi_{q|p}),\;
\hat\pi_{p|q}\hat\pi_{q|p}\Big]\in\Delta^3.
\end{equation*}

\paragraph{Reliability and target.}
The reliability $\rho_{pq}=n_{pq}/(n_{pq}+n_0)$ with $n_0>0$ shrinks the target toward zero when joint observations are few; with $n_0=5$, the NVS median gives $\rho_{pq}=0.815$ and the PCBA median $0.99998$. Under a signal and noise model given in the appendix, it is the linear shrinkage with minimum mean squared error when $n_0$ equals the noise-to-signal ratio \citep{robbins1956,james1961}; we treat $n_0$ as a hyperparameter. For a triple $(i,p,q)$ with $p\neq q$ and both labels observed,
\begin{equation}
\label{eq:target}
r_{i,p,q}=\rho_{pq}\Big[\,e(y_{i,p},y_{i,q})-b_{pq}\,\Big]\in\mathbb{R}^4,
\qquad
\textstyle\sum_k r_{i,p,q}[k]=0.
\end{equation}
With exact marginals, $\mathbb{E}[r_{i,p,q}]=\mathbf{0}$ when $p$ and $q$ are independent (Proposition~\ref{prop:unbiased}). The signed contrast of the target and the signed score $1-2\hat y_{\text{rel}}$ of a binary head that fits $\mathbb{E}[y_{\text{rel}}]$ have means
\begin{equation}
\label{eq:scores}
\mathbb{E}\big[(r_{00}+r_{11})-(r_{01}+r_{10})\big]=2\rho_{pq}(1-c_{pq})\kappa_{pq},
\qquad
\bar d^{\,\mathrm{bin}}_{pq}=(2c_{pq}-1)+2(1-c_{pq})\kappa_{pq},
\end{equation}
so only the binary score carries the marginal term $2c_{pq}-1$.

\paragraph{Unseen target properties.}
Support sets are stratified by label, so the positive fraction of the target in a support set reflects the sampling rule rather than its marginal. For pairs that involve an unseen target $\tau$, we condition on the support label and subtract only the auxiliary marginal:
\begin{equation*}
b_{i,\tau q}=\Big[(1-y_{i,\tau})(1-\hat\pi_q),\;
(1-y_{i,\tau})\hat\pi_q,\;
y_{i,\tau}(1-\hat\pi_q),\;
y_{i,\tau}\hat\pi_q\Big],
\qquad
\rho_{\tau q}=\frac{n_q}{n_q+n_0},
\end{equation*}
where $n_q$ and $\hat\pi_q$ are the number of observed labels and the smoothed positive rate of $q$ on the meta-training columns. Query labels of the target are never used for relation targets, graph construction, or routing.

\paragraph{Swap-equivariant head.}
Swapping the two properties should exchange the two disagreement coordinates and leave the other two unchanged. We therefore combine the swap-invariant features
\begin{equation*}
\xi_{ipq}=\big[\,h_i\odot(h_p+h_q)\;\Vert\;h_p\odot h_q\;\Vert\;|h_p-h_q|\,\big]\in\mathbb{R}^{3d}
\end{equation*}
with the antisymmetric scalar $\delta_{ipq}=g(h_i\odot h_p)-g(h_i\odot h_q)=-\delta_{iqp}$ from an MLP $g:\mathbb{R}^{d}\to\mathbb{R}$. An MLP $f:\mathbb{R}^{3d}\to\mathbb{R}^3$ maps $\xi_{ipq}$ to $(\hat r_{00},\hat r_{11},u_c)$, where $u_c$ is the sum of the two disagreement coordinates, and $\delta_{ipq}$ splits it:
\begin{equation}
\label{eq:split}
\hat r_{01}=\tfrac{1}{2}(u_c+\delta_{ipq}),
\qquad
\hat r_{10}=\tfrac{1}{2}(u_c-\delta_{ipq}).
\end{equation}
After centering, $\hat r\leftarrow\hat r-\frac{1}{4}\sum_k\hat r[k]$, the head satisfies $\hat r_{iqp}=P_{\text{swap}}\hat r_{ipq}$ exactly, where $P_{\text{swap}}$ exchanges $01$ and $10$ (Proposition~\ref{prop:equivariance}). Unlike the input of ReCoG, $\xi_{ipq}$ contains cross terms between molecule and property, and $\delta_{ipq}$ keeps the direction of the relation. The head is trained with the masked loss $\mathcal{L}_{\text{C4RC}}=\sum m_{i,p,q}\|\hat r_{i,p,q}-r_{i,p,q}\|_2^2\big/\big(4\sum m_{i,p,q}\big)$ over valid triples. The outer objective is $\mathcal{L}_{\text{outer}}=\mathcal{L}_{\text{base}}+\eta_{\text{rel}}\mathcal{L}_{\text{C4RC}}$, where $\mathcal{L}_{\text{base}}$ collects the remaining terms of the ReCoG objective, including the query prediction loss and the bottleneck objective.

\subsection{MSRHA: Molecule-Conditioned Signed Relational Hypergraph Adapter}
\label{sec:5.2}

\paragraph{Signed agreement.}
For a molecule $i$, the episode target $\tau$, and an auxiliary property $a\in A_\tau$, we evaluate the head on the triple and take the signed contrast
\begin{equation}
\label{eq:signed}
d_{i\tau a}=\big(\hat r_{00}+\hat r_{11}\big)-\big(\hat r_{01}+\hat r_{10}\big)\in\mathbb{R}.
\end{equation}
The sign of $d_{i\tau a}$ selects the message channel and its magnitude sets the weight.

\paragraph{Molecule-conditioned sparse routing.}
For each molecule and target, we keep the $k$ auxiliary properties with the highest gated scores:
\begin{equation*}
\beta_{i\tau a}=\lambda_a\,\big|d_{i\tau a}\big|,
\qquad
\mathcal{A}_{i\tau}=\operatorname{Top}\text{-}k_{\,a\in A_\tau}\;\beta_{i\tau a}.
\end{equation*}
Training passes with the bottleneck use $\lambda_a$; ordinary prediction and testing set $\lambda_a=1$.

\paragraph{Ordered hyperedges.}
For each $a\in\mathcal{A}_{i\tau}$, the ordered triple $e=(m_i,\tau,a)$ is encoded jointly:
\begin{equation*}
z_e=\mathrm{ReLU}\Big(\mathrm{LN}\big(W_m h_i+W_t h_\tau+W_a h_a+W_r\hat r_{i\tau a}+b\big)\Big)\in\mathbb{R}^{d_h}.
\end{equation*}
Unlike a sum of pairwise terms $\psi_{12}+\psi_{13}+\psi_{23}$, this encoder can express three-way interactions.

\paragraph{Signed, role-aware write-back.}
Each edge carries an agreement weight $w_e^{+}=\lambda_a\operatorname{ReLU}(d_{i\tau a})$ and an opposition weight $w_e^{-}=\lambda_a\operatorname{ReLU}(-d_{i\tau a})$, and messages are aggregated with projections specific to each sign and node role $\mathrm{rl}(v)\in\{\text{molecule},\text{target}\}$:
\begin{equation}
\label{eq:aggregate}
\Delta h_v=\frac{1}{Z_v}\sum_{e\,\ni\,v}
\Big(w_e^{+}T^{+}_{\mathrm{rl}(v)}(z_e)-w_e^{-}T^{-}_{\mathrm{rl}(v)}(z_e)\Big),
\qquad
Z_v=\varepsilon+\sum_{e\,\ni\,v}\big(w_e^{+}+w_e^{-}\big),
\end{equation}
where $T^{\pm}_{\mathrm{rl}}$ are role-specific affine maps. Dividing by $Z_v$ makes $\Delta h_v$ a weighted average of edge messages, so its scale does not grow with the number of retained edges. Only molecule and target nodes are updated.

\paragraph{Zero-initialized residual gate.}
The update enters as $h'_v=h_v+\gamma\,\Delta h_v$ with $\gamma_0=0$. Because LayerNorm \citep{ba2016} sits inside the message, a zero gate reproduces the predictions and the prediction-loss gradients of the model without the adapter, while $\gamma$ itself still receives a gradient (Proposition~\ref{prop:identity}). Depending on the ablation setting, $\gamma$ is learned or frozen at zero. The adapter scores $N$ candidate triples per molecule and passes messages along $\min(k,N)$ of them, so its cost grows linearly with $N$. On Tox21, C4RC and MSRHA add 588K parameters (19.50\%); Appendix~\ref{sec:G} gives the ablation settings and the complexity.

\section{Experiments}
\label{sec:6}

\subsection{Setup}
\label{sec:6.1}

\paragraph{Data and tasks.}
We use Tox21, SIDER, MUV, ToxCast, and PCBA from MoleculeNet \citep{wu2018} with the property splits of GS-Meta \citep{zhuang2023}. Evaluating the nine ToxCast assay providers separately gives thirteen datasets and 26 conditions in the 1-shot and 10-shot settings. Each episode samples $K$ support molecules per class, and the remaining observed molecules of the target form the query set. In the default warm setting, query molecules keep their auxiliary labels on meta-training properties.

\paragraph{Model and training.}
The encoder is a five-layer pretrained GIN with 300 hidden dimensions, followed by the two-layer context encoder. Training usually runs for 2{,}000 episodes with evaluation every 100 episodes and query batches of 16 for training and 64 for testing. Appendix~\ref{sec:F.1} gives the splits, sampling, evaluation protocol, and hyperparameters.

\paragraph{Baselines and reporting.}
Table~\ref{tab:1} includes methods trained from scratch: Siamese \citep{koch2015}, ProtoNet \citep{snell2017}, MAML \citep{finn2017}, TPN \citep{liu2019}, EGNN \citep{kim2019}, IterRefLSTM \citep{altaetran2017}, and MHNfs \citep{schimunek2023}. Methods with a pretrained encoder are Pre-GNN \citep{hu2020}, Meta-MGNN \citep{guo2021}, Pre-PAR \citep{wang2021}, Pre-GS-Meta \citep{zhuang2023}, Pre-ADKF-IFT \citep{chen2023}, Pre-PACIA \citep{wu2024}, Pre-KRGTS \citep{wang2025a}, Pin-Tuning \citep{wang2024}, and Pre-UniMatch \citep{li2025}. MLTI \citep{yao2022} and SMILE \citep{liu2025b} are augmentation methods, and ReCoG \citep{wang2026} is the closest method. All baseline values and error terms are taken from \citet{wang2026}. We report ROC-AUC (\%) averaged over the held-out properties of each dataset. Tables~\ref{tab:1} and~\ref{tab:3} give the mean and sample standard deviation over three runs with a fixed configuration per entry, and the ablations in Section~\ref{sec:6.4} are single runs.

\subsection{Main Results}
\label{sec:6.2}

\begin{table}[t]
\caption{ROC-AUC (\%). CalibHyper: mean $\pm$ sample standard deviation over three seeds; baselines from \citet{wang2026}. ToxCast averages its nine subsets; a dash marks an unreported result.}
\label{tab:1}
\centering
\resizebox{\linewidth}{!}{%
\setlength{\tabcolsep}{2.5pt}
\begin{tabular}{@{}l*{10}{c}@{}}
\toprule
& \multicolumn{2}{c}{Tox21} & \multicolumn{2}{c}{SIDER} & \multicolumn{2}{c}{MUV} & \multicolumn{2}{c}{ToxCast} & \multicolumn{2}{c}{PCBA} \\
\cmidrule(lr){2-3}\cmidrule(lr){4-5}\cmidrule(lr){6-7}\cmidrule(lr){8-9}\cmidrule(l){10-11}
Method & 10-shot & 1-shot & 10-shot & 1-shot & 10-shot & 1-shot & 10-shot & 1-shot & 10-shot & 1-shot \\
\midrule
Siamese & 80.40$\pm$0.35 & 65.00$\pm$1.58 & 71.10$\pm$4.32 & 51.43$\pm$3.31 & 59.96$\pm$5.13 & 50.00$\pm$0.17 & -- & -- & -- & -- \\
ProtoNet & 74.98$\pm$0.32 & 65.58$\pm$1.72 & 64.54$\pm$0.89 & 57.50$\pm$2.34 & 65.88$\pm$4.11 & 58.31$\pm$3.18 & 63.70$\pm$1.26 & 56.36$\pm$1.54 & 64.93$\pm$1.94 & 55.79$\pm$1.45 \\
MAML & 80.21$\pm$0.24 & 75.74$\pm$0.48 & 70.43$\pm$0.76 & 67.81$\pm$1.12 & 63.90$\pm$2.28 & 60.51$\pm$3.12 & 66.79$\pm$0.85 & 65.97$\pm$5.04 & 66.22$\pm$1.31 & 62.04$\pm$1.73 \\
TPN & 76.05$\pm$0.24 & 60.16$\pm$1.18 & 67.84$\pm$0.95 & 62.90$\pm$1.38 & 65.22$\pm$5.82 & 50.00$\pm$0.51 & 62.74$\pm$1.45 & 50.01$\pm$0.05 & -- & -- \\
EGNN & 81.21$\pm$0.16 & 79.44$\pm$0.22 & 72.87$\pm$0.73 & 70.79$\pm$0.95 & 65.20$\pm$2.08 & 62.18$\pm$1.76 & 63.65$\pm$1.57 & 61.02$\pm$1.94 & 69.92$\pm$1.85 & 62.14$\pm$1.58 \\
IterRefLSTM & 81.10$\pm$0.17 & 80.97$\pm$0.10 & 69.63$\pm$0.31 & 71.73$\pm$0.14 & 49.56$\pm$5.12 & 48.54$\pm$3.12 & -- & -- & -- & -- \\
MHNfs & 80.23$\pm$0.84 & -- & 65.89$\pm$1.17 & -- & 73.81$\pm$2.53 & -- & 74.91$\pm$0.73 & -- & -- & -- \\
\midrule
Pre-GNN & 82.14$\pm$0.08 & 81.68$\pm$0.09 & 73.96$\pm$0.08 & 73.24$\pm$0.12 & 67.14$\pm$1.58 & 64.51$\pm$1.45 & 73.68$\pm$0.74 & 72.90$\pm$0.84 & 76.79$\pm$0.45 & 75.27$\pm$0.49 \\
Meta-MGNN & 82.94$\pm$0.10 & 82.13$\pm$0.13 & 75.43$\pm$0.21 & 73.36$\pm$0.32 & 68.99$\pm$1.84 & 65.54$\pm$2.13 & 76.27$\pm$0.57 & 72.43$\pm$0.85 & 72.58$\pm$0.34 & 72.51$\pm$0.35 \\
Pre-PAR & 84.93$\pm$0.11 & 83.01$\pm$0.09 & 78.08$\pm$0.16 & 74.46$\pm$0.29 & 69.96$\pm$1.37 & 66.94$\pm$1.12 & 75.12$\pm$0.84 & 73.63$\pm$1.00 & 73.71$\pm$0.61 & 72.49$\pm$0.61 \\
Pre-GS-Meta & 86.91$\pm$0.41 & 86.46$\pm$0.55 & 85.08$\pm$0.54 & 84.45$\pm$0.26 & 70.18$\pm$1.25 & 67.15$\pm$2.04 & 83.81$\pm$0.16 & 81.57$\pm$0.18 & 79.40$\pm$0.43 & 78.16$\pm$0.47 \\
Pre-ADKF-IFT & 86.06$\pm$0.35 & 80.97$\pm$0.48 & 70.95$\pm$0.60 & 62.16$\pm$1.03 & 95.74$\pm$0.37 & 67.25$\pm$3.87 & 76.22$\pm$0.13 & 71.13$\pm$1.15 & 80.21$\pm$0.27 & 76.62$\pm$0.73 \\
Pre-PACIA & 86.40$\pm$0.27 & 84.35$\pm$0.14 & 83.97$\pm$0.22 & 80.70$\pm$0.28 & 73.43$\pm$1.96 & 69.26$\pm$2.35 & 76.22$\pm$0.73 & 75.09$\pm$0.95 & 75.36$\pm$0.30 & 70.16$\pm$1.33 \\
Pre-KRGTS & 87.62$\pm$0.29 & 87.54$\pm$0.11 & 85.09$\pm$0.31 & 84.61$\pm$0.16 & 74.47$\pm$0.82 & 68.69$\pm$0.60 & 84.02$\pm$0.10 & 82.39$\pm$0.29 & 81.59$\pm$0.30 & 81.18$\pm$0.17 \\
Pin-Tuning & 91.56$\pm$2.57 & 85.71$\pm$1.33 & 93.41$\pm$3.52 & 84.36$\pm$1.73 & 73.33$\pm$2.00 & 66.44$\pm$2.50 & 84.94$\pm$1.09 & 80.08$\pm$1.21 & 81.26$\pm$0.46 & 71.98$\pm$1.55 \\
Pre-UniMatch & 86.35$\pm$0.13 & -- & 80.34$\pm$0.45 & -- & 86.35$\pm$0.76 & -- & 81.63$\pm$0.73 & -- & -- & -- \\
\midrule
MLTI & 78.95$\pm$0.17 & 65.49$\pm$2.45 & 81.57$\pm$1.72 & 65.85$\pm$1.48 & -- & -- & -- & -- & -- & -- \\
SMILE & 76.23$\pm$0.69 & 60.69$\pm$1.24 & 71.80$\pm$0.79 & 56.49$\pm$1.49 & -- & -- & -- & -- & -- & -- \\
\midrule
ReCoG & 93.52$\pm$1.47 & 91.01$\pm$2.03 & 93.98$\pm$1.53 & 90.06$\pm$1.17 & 95.94$\pm$1.37 & 81.96$\pm$0.42 & 88.85$\pm$1.39 & 86.05$\pm$1.74 & 82.57$\pm$1.42 & 82.39$\pm$1.64 \\
CalibHyper & \textbf{96.82$\pm$1.70} & \textbf{95.96$\pm$2.47} & \textbf{96.89$\pm$2.33} & \textbf{94.93$\pm$1.08} & \textbf{97.19$\pm$1.21} & \textbf{92.93$\pm$1.51} & \textbf{90.01$\pm$0.93} & \textbf{89.64$\pm$1.37} & \textbf{90.37$\pm$2.33} & \textbf{90.15$\pm$1.57} \\
\bottomrule
\end{tabular}}
\end{table}

CalibHyper exceeds the strongest reported baseline in all 26 dataset and shot conditions (Table~\ref{tab:1}; Table~\ref{tab:2} gives the nine ToxCast subsets). Two 10-shot ToxCast entries, CLD and Tanguay, come from an ablation setting with binary relation targets and the MSRHA gate fixed at $\gamma=0$.

The margins follow the label statistics of Section~\ref{sec:4}. On PCBA, where joint observations are abundant and $\rho_{pq}\approx1$, the margin over ReCoG is about eight points in both settings. NVS, whose median pair has only 22 joint observations, is the lowest-scoring subset for both methods and has among the smallest margins. Among the subsets, CEETOX has the second-largest margin in both settings. Its paired increase and decrease assays are strongly negatively dependent (Figure~\ref{fig:2}d); the marginal term $2c_{pq}-1$ partly offsets this dependence in a binary score (Equation~\ref{eq:scores}), whereas the calibrated score keeps the negative sign and MSRHA routes such relations through the negative channel. On Tox21, SIDER, MUV, and eight of the nine ToxCast subsets, the margin is larger in the 1-shot setting, where prediction relies more on the auxiliary context.

\subsection{Component Ablation}
\label{sec:6.3}

\begin{table}[t]
\caption{Component ablation (10-shot ROC-AUC, \%, three runs).}
\label{tab:3}
\centering
\small
\begin{tabular}{@{}lccc@{}}
\toprule
Ablation & Tox21 & SIDER & MUV \\
\midrule
CalibHyper (full) & \textbf{96.82$\pm$1.70} & \textbf{96.89$\pm$2.33} & \textbf{97.19$\pm$1.21} \\
w/o C4RC & 90.08$\pm$2.96 & 90.78$\pm$3.33 & 93.82$\pm$2.19 \\
w/o MSRHA & 87.48$\pm$3.52 & 86.46$\pm$1.92 & 67.23$\pm$1.94 \\
w/o C4RC and MSRHA & 93.97$\pm$2.80 & 89.20$\pm$3.90 & 87.15$\pm$3.53 \\
w/o chance correction & 93.00$\pm$3.13 & 88.39$\pm$1.67 & 87.47$\pm$3.97 \\
w/o sign separation & 95.81$\pm$1.12 & 88.75$\pm$1.93 & 67.43$\pm$2.52 \\
\bottomrule
\end{tabular}
\end{table}

All ablations in Table~\ref{tab:3} use the same three-seed protocol, and the full model has the highest mean on every dataset. Removing MSRHA costs more than removing C4RC on all three datasets, most on MUV, yet neither module helps reliably on its own. With the four-state loss but no adapter, the score falls below that of removing both modules on all three datasets, so a relation loss that only shapes shared representations does not improve prediction. With the adapter but binary targets, the score exceeds that of removing both modules on SIDER and MUV but not on Tox21, where removing both scores above either single removal. The two modules therefore interact rather than add.

Inside the modules, keeping four states without chance correction improves on binary targets only on Tox21, so the four-state encoding alone does not explain the gains. Merging the signed channels gives the best ablated score on Tox21 but drops MUV to the level of the ablation without MSRHA. On MUV, rare positives leave almost no room for negative dependence, so negative scores carry little signal, and a separate channel keeps them apart from positive messages. The unsigned ablation also changes the normalization.

\subsection{Ablations on Auxiliary Information and Routing}
\label{sec:6.4}

\begin{figure}[t]
\centering
\includegraphics[width=0.8\linewidth,height=0.36\textheight,keepaspectratio]{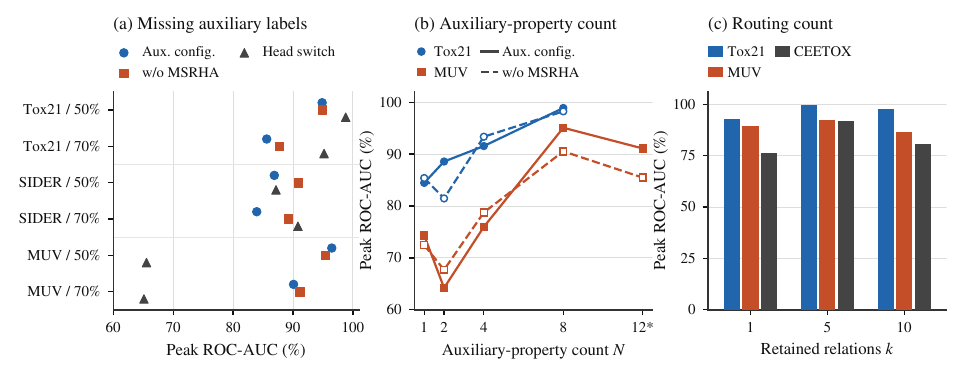}
\caption{Single-run ablations, peak 10-shot ROC-AUC; details in Appendix~\ref{sec:F}. (a) Masked auxiliary labels. (b) Auxiliary-property count $N$ ($12^{*}$: 11 during MUV training). (c) Routing count $k$.}
\label{fig:4}
\end{figure}

\paragraph{Missing auxiliary labels.}
On Tox21, SIDER, and MUV, we hide 50\% or 70\% of the observed auxiliary labels and ablate each dataset's base configuration (``Aux.\ config.'') by removing MSRHA or switching its relation head between binary and four-state targets (Figure~\ref{fig:4}a). Single runs are compared within each dataset, not with the three-run means in Table~\ref{tab:1}. Masking removes joint observations, which lowers $\rho_{pq}$ and makes the calibrated residuals noisier. On Tox21 the binary head gives the highest peak at both rates, while the base configuration and the ablation without MSRHA are nearly tied at 50\%. On MUV, switching to four-state targets drops the peak to Pre-GNN's level at both rates. When positives are rare, the marginal term $2c_{pq}-1$ in Equation~\ref{eq:scores} is close to 1, so binary scores send nearly every auxiliary property through the positive channel with similar weights, whereas the calibrated score is close to zero and routing rests on small residuals. On SIDER, the base configuration freezes $\gamma$ at zero and thus matches the ablation without MSRHA up to the random state, yet their peaks differ by four to five points, the scale of run-to-run variation.

\paragraph{Auxiliary-property count.}
We vary $N$ from 1 to 8 on Tox21 and from 1 to 12 on MUV (Figure~\ref{fig:4}b). More auxiliary properties generally raise the peak, although both MUV curves dip at $N=2$ and decline at $N=12$. With four or fewer auxiliary properties neither setting is consistently better, whereas in the three settings with $N\ge8$ MSRHA gives the higher peak, so the adapter pays off once there are enough relations to route. The advantage holds only at the peak; by the end of training, these runs with MSRHA lose more than those without it.

\paragraph{Routing count.}
The routing count $k$ sets how many relations pass messages to each molecule. Among $k=1$, 5, and 10, Tox21, MUV, and CEETOX all peak at $k=5$ (Figure~\ref{fig:4}c). A single relation discards complementary evidence, which costs most on CEETOX, while $k=10$ admits weakly related properties and, on Tox21, removes selection altogether.

\subsection{Molecular Cases}
\label{sec:6.5}

In the 10-shot setting on Tox21 and on the CEETOX subset of ToxCast, ReCoG \citep{wang2026}, Pin-Tuning \citep{wang2024}, and CalibHyper use the same support sets, query molecules, and auxiliary information, and query labels of the target are hidden before the forward pass. CalibHyper uses unsigned MSRHA on Tox21 and the full model on CEETOX, and the baselines use intermediate checkpoints with different training budgets. From 20{,}895 records of molecule and task pairs, Table~\ref{tab:4} shows four negative molecules, selected post hoc, that CalibHyper classifies correctly and both baselines classify as positive.

\begin{table}[t]
\caption{Molecular cases (10-shot): positive-class score and the label predicted at threshold 0.5 (bold: correct; GT: ground truth). CEETOX target names omit the prefix CEETOX\_H295R\_.}
\label{tab:4}
\centering
\small
\setlength{\tabcolsep}{3pt}
\begin{tabular}{@{}>{\centering\arraybackslash}m{0.6cm}>{\centering\arraybackslash}m{3.6cm}>{\centering\arraybackslash}m{2.2cm}>{\centering\arraybackslash}m{0.8cm}>{\centering\arraybackslash}m{1.3cm}>{\centering\arraybackslash}m{1.5cm}>{\centering\arraybackslash}m{1.6cm}@{}}
\toprule
Case & Molecule & Dataset and target & GT & ReCoG & Pin-Tuning & CalibHyper \\
\midrule
C1 & \includegraphics[width=3.5cm,height=1.5cm,keepaspectratio]{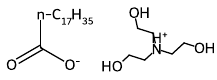} & Tox21\newline SR-HSE & Neg. & 0.99\newline Pos. & 0.59\newline Pos. & $7.39\times10^{-6}$\newline \textbf{Neg.} \\
C2 & \includegraphics[width=3.5cm,height=1.5cm,keepaspectratio]{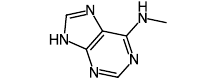} & Tox21\newline SR-HSE & Neg. & 1.00\newline Pos. & 0.57\newline Pos. & $3.77\times10^{-8}$\newline \textbf{Neg.} \\
C3 & \includegraphics[width=3.5cm,height=1.5cm,keepaspectratio]{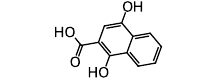} & CEETOX\newline ESTRONE\_up & Neg. & 1.00\newline Pos. & 0.83\newline Pos. & $7.98\times10^{-5}$\newline \textbf{Neg.} \\
C4 & \includegraphics[width=3.5cm,height=1.5cm,keepaspectratio]{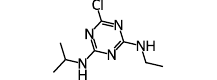} & CEETOX\newline ESTRADIOL\_up & Neg. & 0.98\newline Pos. & 0.87\newline Pos. & 0.02\newline \textbf{Neg.} \\
\bottomrule
\end{tabular}
\end{table}

For C3, the auxiliary labels DOC\_up and PROG\_up are positive while the target ESTRONE\_up is negative. MSRHA routes both relations through the negative channel, so these positive auxiliary labels enter through the projection reserved for opposing relations, the case discussed for CEETOX in Section~\ref{sec:6.2}.

\section{Conclusion}
\label{sec:8}

Label agreement mixes property dependence with class marginals and merges the two ordered disagreement states. CalibHyper replaces it with chance-corrected four-state residuals, shrunk when joint observations are few, and uses the predicted residuals to select, sign, and weight hyperedge messages for each molecule. On 26 conditions, CalibHyper and its ablation settings are competitive with the strongest reported baselines, with larger margins in the 1-shot setting. The margins follow the label statistics: large on PCBA, where joint observations are abundant, and on CEETOX, whose paired assays are negatively dependent, and small on NVS, where joint observations are scarce. The ablations show that the relation loss alone does not improve prediction and that the two modules interact, and routing a few relations per molecule works better than routing one or many.

\paragraph{Limitations.} Labels missing not at random \citep{rubin1976,little2019} and extreme class marginals can bias the relation estimates. On MUV, rare positives and weak dependence leave the calibrated scores near zero, and four-state targets degrade under label masking. Gains vary by dataset and configuration, performance can decline late in training, and the main results cover three seeds.

\subsection*{AI use statement}

We used large language models to assist with writing experiment scripts and orchestrating runs, checking logs, aggregating numerical results, and revising and translating the text of the paper. The authors take responsibility for the methods and experimental conclusions, the accuracy of the content, and the final manuscript.

\subsection*{Ethics statement}

This work uses publicly available molecular property datasets and involves no human subjects or personal information. The intended applications are drug and materials screening.

\subsection*{Reproducibility statement}

Formal definitions and proofs are given in Appendices~\ref{sec:C}, \ref{sec:D}, and \ref{sec:E}, and the experimental setup and full results are given in Appendix~\ref{sec:F}. Appendix~\ref{sec:G} describes the implementation, the ablation settings, and the computing environment.

\bibliography{references}
\bibliographystyle{references}

\clearpage
\appendix
\counterwithin{table}{section}
\counterwithin{equation}{section}
\etocdepthtag.toc{mtappendix}
\etocsettagdepth{mtmain}{none}
\etocsettagdepth{mtappendix}{subsection}
\begingroup
\hypersetup{linkcolor=tocblue}
\etocsettocstyle{\noindent{\large\bfseries List of Appendices}\par\vspace{1.5ex}}{}
\tableofcontents
\endgroup
\clearpage

\section{Extended Related Work}
\label{sec:A}

\subsection{Few-Shot Molecular Property Prediction and Context-Aware Methods}
\label{sec:A.1}

\citet{david2020} review molecular representations. Metric-based methods predict through similarities or prototypes \citep{koch2015,snell2017,altaetran2017}, and gradient-based meta-learning optimizes parameters that adapt well within a task \citep{finn2017,guo2021}. Molecular pretraining methods include Pre-GNN \citep{hu2020} and GROVER \citep{rong2020}, and FS-Mol \citep{stanley2021} provides a separate benchmark for few-shot evaluation. Multitask drug discovery shares statistical strength across the property labels of the same molecule \citep{ramsundar2015,mayr2018}, and context-aware methods carry this idea over to unseen properties by using such auxiliary observations as context.

KRGTS \citep{wang2025a} enriches the relation graph with scaffold and functional-group similarity. CaMol \citep{hoang2026} uses a context graph of functional groups, molecules, and properties to guide atom masking and substructure extraction, and constructs distributional interventions under its assumed causal model. CalibHyper instead estimates how far jointly observed labels deviate from the independence baseline implied by their marginals.

\subsection{Selecting, Weighting, and Compressing Auxiliary Information}
\label{sec:A.2}

Besides MolGroup and AutAuT, WAS \citep{fan2024} decouples the selection and the weighting of pretrained teachers at the instance level and then trains a student by distillation. MolGroup learns a routing mechanism through bilevel optimization, and AutAuT discovers auxiliary tasks with a large language model; the three methods select auxiliary datasets, auxiliary tasks, and pretrained teachers, respectively. CalibHyper works within the auxiliary property set of a given episode and learns relation residuals that generate molecule-conditioned messages.

Graph information bottleneck methods follow the information bottleneck principle \citep{tishby1999} and differ in the conditioning variable and in what they compress. Besides CGIB, IGIB-ISE, and InfoAlign, PGIB \citep{seo2023} combines the compression of key subgraphs with prototype similarity for interpretable prediction. These methods compress substructures of the input graph or molecular representations. Target-conditioned compression of auxiliary property nodes has also been studied in ReCoG \citep{wang2026}. CalibHyper uses this form of context compression together with calibrated relation supervision and signed propagation.

\subsection{Agreement Correction and Relational Structure}
\label{sec:A.3}

\citet{warrens2008} studies which association coefficients for $2\times2$ tables do not depend on the marginal distributions. CalibHyper applies chance correction to the four ordered label states and shrinks the result according to the number of joint observations.

Hypergraph learning can also be approximated with graph structures \citep{yadati2019}, and relational graph convolutions use a separate transformation for each relation type \citep{schlichtkrull2018}. Equivariant networks map each transformation of the input to a fixed transformation of the output \citep{cohen2016}. Residual learning adds an increment to an existing representation \citep{he2016}, and a zero-initialized residual gate lets the new path start from the identity map \citep{bachlechner2021}. Our relation head and residual gate follow these two designs (Propositions~\ref{prop:equivariance} and~\ref{prop:identity}).

\subsection{Evaluation Protocol and Scope of Comparison}
\label{sec:A.4}

Reproducible comparisons require explicit task splits, model configurations, sources of randomness, and aggregation rules \citep{bouthillier2021,pineau2021}. Our main comparison and component analysis use mean ROC-AUC and sample standard deviation across independent runs; Appendix~\ref{sec:F.1} specifies the reporting protocol and distinguishes it from the single-run ablations.

Graph prompting methods and graph foundation models target adaptation across graphs or tasks. OFA \citep{liu2024} unifies feature spaces through text attributes and organizes classification and few-shot tasks at different levels with prompt graphs, GIT \citep{wang2025c} unifies pretraining and in-context learning with task trees, and AutoGFM \citep{chen2025} chooses the architecture according to the features of the downstream graph. We evaluate 1-shot and 10-shot ROC-AUC on unseen properties.

\section{Notation}
\label{sec:B}

Table~\ref{tab:B1} lists the notation of the main text, and Table~\ref{tab:B2} the additional symbols of Appendices~\ref{sec:C} to~\ref{sec:E}.

\begin{table}[ht]
\caption{Notation of the main text.}
\label{tab:B1}
\centering
\small
\begin{tabular}{@{}ll@{}}
\toprule
Symbol & Meaning \\
\midrule
$\mathcal{M}$, $\mathcal{T}$ & set of molecules, set of properties \\
$\mathcal{T}_{\text{train}}$, $\mathcal{T}_{\text{test}}$ & disjoint meta-training and meta-test properties \\
$y_{i,p}\in\{0,1,\star\}$ & label of molecule $i$ for property $p$; $\star$ means unmeasured \\
$o_{i,p}$ & observation indicator $\1[y_{i,p}\neq\star]$ \\
$E_\tau=(S_\tau,Q_\tau)$ & episode with target $\tau$, its support set and query set \\
$A_\tau$, $N$ & auxiliary properties of the episode and their number \\
$K$, $M$ & support molecules per class, query molecules \\
$G_\tau$, $V^m_\tau$, $V^p_\tau$ & context graph, its molecule nodes and property nodes \\
$F_{\text{mol}}$, $d$ & pretrained molecule encoder, embedding width ($d=300$) \\
$F_{\text{rel}}$, $F_{\text{pred}}$ & relation head, property predictor \\
$H=\{h_v\}$, $H'=\{h'_v\}$ & output of the context encoder, representations read by the predictor \\
$\lambda_a$ & bottleneck gate of auxiliary property $a$ \\
$e(u,v)\in\{0,1\}^4$ & one-hot joint label state in the order $(00,01,10,11)$ \\
$n_{pq}$ & number of molecules observed for both $p$ and $q$ \\
$\hat\pi_{p|q}$, $\alpha$ & smoothed positive rate of $p$ on jointly observed molecules, Laplace constant \\
$b_{pq}$ & four-state independence baseline \\
$\rho_{pq}$, $n_0$ & reliability of joint observations, shrinkage constant \\
$r_{i,p,q}\in\mathbb{R}^4$ & chance-corrected four-state relation target, Equation~\ref{eq:target} \\
$\hat r_{i,p,q}$ & prediction of the swap-equivariant head \\
$\xi_{ipq}$, $\delta_{ipq}$ & swap-invariant features, antisymmetric scalar splitting the disagreement states \\
$d_{i\tau a}$ & signed agreement, Equation~\ref{eq:signed} \\
$\beta_{i\tau a}$, $k$ & routing score, auxiliary properties kept per molecule and target \\
$w^{\pm}_e$, $z_e$ & signed hyperedge weights, hyperedge encoding \\
$\mathrm{rl}(v)$ & role of a node (molecule or target), which selects the write-back projection \\
$\gamma$ & residual gate of the adapter, initialized at zero \\
$\eta_{\text{rel}}$ & weight of the relation loss \\
$\kappa_{pq}$, $a_{pq}$, $c_{pq}$ & Cohen's $\kappa$, observed agreement, chance agreement \\
\bottomrule
\end{tabular}
\end{table}

\begin{table}[ht]
\caption{Additional symbols of Appendices~\ref{sec:C} to~\ref{sec:E}.}
\label{tab:B2}
\centering
\small
\begin{tabular}{@{}ll@{}}
\toprule
Symbol & Meaning \\
\midrule
$I_{pq}$, $P_{pq}$ & joint observation set and the empirical law of the pair on it (Definition~\ref{def:pair}) \\
$\pi_{p|q}$ & unsmoothed positive rate of $p$ on jointly observed molecules (Definition~\ref{def:pair}) \\
$b^\ast_{pq}$, $r^\ast_{i,p,q}$ & baseline and ideal target built from exact unsmoothed marginals (Definitions~\ref{def:smooth} and~\ref{def:target}) \\
$\chi_\varnothing$, $\chi_p$, $\chi_q$, $\chi_{pq}$ & Walsh basis of $\mathbb{R}^4$, with $\chi_{pq}=\chi_p\odot\chi_q$ (Definition~\ref{def:walsh}) \\
$s_p$, $s_q$ & labels in $\pm1$ form, $s_p=1-2Y_p$ (Definition~\ref{def:walsh}) \\
$P_{\mathrm{swap}}$, $\mathcal{S}$ & swap acting on output coordinates and on inputs (Definition~\ref{def:swap}) \\
$d(r)$, $\mathrm{dir}(r)$ & signed agreement and directional component (Definition~\ref{def:signed}) \\
$\Delta_{pq}$ & agreement beyond chance, $a_{pq}-c_{pq}$ (Definition~\ref{def:kappa}) \\
$\omega^2$, $\sigma^2$ & prior variance of $\Delta_{pq}$, scale of its sampling variance (Appendix~\ref{sec:E.2}) \\
$\mathcal{P}$, $\Delta^{(3)}_h$ & pairwise decomposable functions, mixed third-order difference (Definition~\ref{def:pairwise}) \\
$\Pi_0$ & zero-sum subspace of $\mathbb{R}^4$ (Appendix~\ref{sec:D.5}) \\
\bottomrule
\end{tabular}
\end{table}

\section{Experimental Setup and Additional Results}
\label{sec:F}

\subsection{Evaluation Protocol and Baseline Sources}
\label{sec:F.1}

\paragraph{Task splits and sampling.} Testing covers all held-out properties: 3 on Tox21, 6 on SIDER, 5 on MUV, and 10 on PCBA, and on the ToxCast subsets 10 on APR, 40 on ATG, 31 on BSK, 4 on CEETOX, 5 on CLD, 39 on NVS, 4 on OT, 20 on TOX21, and 5 on Tanguay. The support set usually contains $K$ molecules per class. When the minority class has at most $K$ molecules, one minority molecule is kept for the query set, and molecules of the other class fill the support set up to $2K$ molecules.

\paragraph{Metrics.} On each dataset, ROC-AUC is averaged over the held-out properties. For ToxCast as a whole, each run averages the nine subsets equally at the same evaluation episode, and Table~\ref{tab:1} reports its three-run mean; the entries of Table~\ref{tab:2} are separate three-run means. AP is computed with the \texttt{average\_precision\_score} function of scikit-learn. Reported differences are calculated before rounding; displayed experimental results and error terms use two decimal places.

\paragraph{Reporting.} The CalibHyper results in Tables~\ref{tab:1}, \ref{tab:3}, and~\ref{tab:2} are means over three independent runs. The model configuration is fixed within each entry; Table~\ref{tab:F0} lists the best-performing hyperparameters of the main experiments. The complete model and ablations use the same evaluation protocol. For run scores $x_1,x_2,x_3$, the reported center is $\bar{x}=\frac13\sum_{s=1}^{3}x_s$ and the $\pm$ term is the sample standard deviation $\sqrt{\frac12\sum_{s=1}^{3}(x_s-\bar{x})^2}$. Baseline numbers retain their original definitions: Pin-Tuning, GS-Meta, and PACIA report means and standard deviations over ten seeds, whereas ReCoG does not specify its repetition count or the definition of its error term. The ablations of Section~\ref{sec:6.4} are single runs with 2{,}000 training episodes and evaluation every 100 episodes, using deterministic evaluation and single-process data loading; they are compared within each group, not with the three-run means.

\paragraph{Archived configuration and checkpoint selection.} The archived searches refined dataset- and shot-specific configurations over successive rounds, varying learning rates, loss weights, smoothing, shrinkage, and adapter settings across CalibHyper and its ablations. Each recorded 2{,}000-episode run was evaluated at 100-episode intervals. The checkpoint with the highest mean ROC-AUC across meta-test properties was selected from these 20 evaluations, and configurations were ranked by this score. No separate validation split was used for selection. The archived three-seed summaries average the scores at the three independently selected checkpoints and report their sample standard deviation. For the archived ToxCast aggregate, the nine subsets are averaged at each common evaluation episode, and the episode with the highest aggregate score is selected. These statistics retain the effect of test-based selection.

\paragraph{Scope of the selection records.} The archived runs and the training loop document this selection procedure, but have not been matched to the individual runs behind the current main-table summaries. Their configuration and checkpoint choices therefore remain unverified by these records.

\paragraph{Baseline sources.} The nineteen compared methods in Table~\ref{tab:1} come from \citet{wang2026}: sixteen baselines and ReCoG from its Table 1, and MLTI and SMILE from its Table 11, which reports only 1-shot and 10-shot ROC-AUC on Tox21 and SIDER. Values, $\pm$ terms, and missing entries follow the original. The eight ToxCast baselines in Table~\ref{tab:2} come from its Tables 8 and 9.

\begin{table}[ht]
\caption{Hyperparameters of the main experiments.}
\label{tab:F0}
\centering
\small
\setlength{\tabcolsep}{4pt}
\begin{tabular}{@{}lr@{\hspace{1.8em}}lr@{\hspace{1.8em}}lr@{}}
\toprule
GIN layers / width $d$ & 5 / 300 & Training episodes & 2{,}000 & Laplace constant $\alpha$ & 1 \\
Query batch (train/test) & 16 / 64 & Routing count $k$ & 5 & Shrinkage constant $n_0$ & 5 \\
\bottomrule
\end{tabular}
\end{table}

\subsection{Missing Auxiliary Labels and Auxiliary-Property Count}
\label{sec:F.2}

Tables~\ref{tab:F1} and~\ref{tab:F2} list the single-run ablations on missing auxiliary labels and on the number of auxiliary properties, 18 runs each (Figure~\ref{fig:4}a, b). Each dataset has its own ablation base (``Aux.\ config.''): C4RC with unsigned MSRHA on Tox21, binary relations with MSRHA on MUV, and binary relations with $\gamma$ fixed at zero on SIDER. The ablations keep the parameters and seeds of each group, and switching the head changes between binary and four-state targets. Peak is the highest value over all evaluations, Last-5 is the mean ROC-AUC of the last five, and Final is the last value.

\begin{table}[ht]
\caption{Missing auxiliary labels in the 10-shot setting (\%).}
\label{tab:F1}
\centering
\small
\setlength{\tabcolsep}{4pt}
\begin{tabular}{@{}llccccc@{}}
\toprule
& & \multicolumn{3}{c}{ROC-AUC} & \multicolumn{2}{c}{AP} \\
\cmidrule(lr){3-5}\cmidrule(l){6-7}
Masked & Ablation setting & Peak & Last-5 & Final & Peak & Final \\
\midrule
\multicolumn{7}{@{}l}{\emph{Tox21}} \\
50\% & C4RC, unsigned MSRHA & 94.87 & 73.73 & 86.32 & 85.53 & 40.31 \\
50\% & w/o MSRHA & 94.89 & 79.70 & 90.07 & 66.13 & 49.88 \\
50\% & Head switched to binary & 98.80 & 88.44 & 80.66 & 93.30 & 30.17 \\
70\% & C4RC, unsigned MSRHA & 85.62 & 72.15 & 57.30 & 41.84 & 22.39 \\
70\% & w/o MSRHA & 87.70 & 75.99 & 87.70 & 44.59 & 44.59 \\
70\% & Head switched to binary & 95.19 & 79.22 & 95.19 & 77.19 & 66.97 \\
\midrule
\multicolumn{7}{@{}l}{\emph{SIDER}} \\
50\% & Binary, $\gamma$ fixed at 0 & 86.88 & 78.16 & 73.01 & 83.68 & 70.07 \\
50\% & w/o MSRHA & 90.88 & 78.47 & 72.40 & 88.23 & 67.58 \\
50\% & Head switched to C4RC & 87.16 & 82.62 & 87.16 & 79.75 & 77.74 \\
70\% & Binary, $\gamma$ fixed at 0 & 83.95 & 68.58 & 40.80 & 80.96 & 57.07 \\
70\% & w/o MSRHA & 89.24 & 82.57 & 89.24 & 85.12 & 85.12 \\
70\% & Head switched to C4RC & 90.82 & 85.11 & 87.56 & 83.53 & 77.80 \\
\midrule
\multicolumn{7}{@{}l}{\emph{MUV}} \\
50\% & Binary, MSRHA & 96.48 & 83.20 & 65.00 & 43.05 & 23.75 \\
50\% & w/o MSRHA & 95.42 & 72.92 & 59.26 & 50.93 & 16.62 \\
50\% & Head switched to C4RC & 65.54 & 60.06 & 58.09 & 1.23 & 0.30 \\
70\% & Binary, MSRHA & 90.10 & 79.79 & 61.91 & 34.71 & 21.75 \\
70\% & w/o MSRHA & 91.19 & 87.97 & 87.02 & 23.22 & 23.22 \\
70\% & Head switched to C4RC & 65.12 & 62.11 & 60.10 & 1.51 & 0.34 \\
\bottomrule
\end{tabular}
\end{table}

\begin{table}[ht]
\caption{Number of auxiliary properties in the 10-shot setting (\%).}
\label{tab:F2}
\centering
\small
\setlength{\tabcolsep}{4pt}
\begin{tabular}{@{}llccccc@{}}
\toprule
& & \multicolumn{3}{c}{ROC-AUC} & \multicolumn{2}{c}{AP} \\
\cmidrule(lr){3-5}\cmidrule(l){6-7}
$N$ & Ablation setting & Peak & Last-5 & Final & Peak & Final \\
\midrule
\multicolumn{7}{@{}l}{\emph{Tox21}} \\
1 & C4RC, unsigned MSRHA & 84.46 & 82.58 & 82.68 & 38.22 & 34.26 \\
1 & w/o MSRHA & 85.39 & 82.44 & 81.59 & 38.17 & 32.28 \\
2 & C4RC, unsigned MSRHA & 88.57 & 83.13 & 82.84 & 47.35 & 36.14 \\
2 & w/o MSRHA & 81.46 & 74.68 & 75.29 & 32.45 & 24.34 \\
4 & C4RC, unsigned MSRHA & 91.55 & 79.00 & 87.60 & 54.61 & 54.52 \\
4 & w/o MSRHA & 93.35 & 86.51 & 90.16 & 55.49 & 45.57 \\
8 & C4RC, unsigned MSRHA & 98.88 & 67.67 & 37.79 & 90.43 & 7.05 \\
8 & w/o MSRHA & 98.23 & 79.63 & 91.55 & 86.83 & 75.20 \\
\midrule
\multicolumn{7}{@{}l}{\emph{MUV}} \\
1 & Binary, MSRHA & 74.33 & 64.33 & 61.49 & 0.88 & 0.37 \\
1 & w/o MSRHA & 72.45 & 62.95 & 61.13 & 0.89 & 0.30 \\
2 & Binary, MSRHA & 64.23 & 53.35 & 52.66 & 0.40 & 0.17 \\
2 & w/o MSRHA & 67.72 & 56.60 & 59.50 & 0.47 & 0.23 \\
4 & Binary, MSRHA & 75.98 & 71.26 & 73.75 & 8.03 & 0.90 \\
4 & w/o MSRHA & 78.76 & 71.44 & 69.17 & 8.84 & 2.37 \\
8 & Binary, MSRHA & 95.08 & 66.27 & 64.28 & 42.10 & 21.43 \\
8 & w/o MSRHA & 90.53 & 85.89 & 82.42 & 28.12 & 23.30 \\
12 & Binary, MSRHA & 91.08 & 54.13 & 44.21 & 38.51 & 8.69 \\
12 & w/o MSRHA & 85.46 & 54.29 & 54.29 & 22.42 & 8.69 \\
\bottomrule
\end{tabular}
\end{table}

The count sweep uses seed 202. On Tox21, $N=8$ uses eight auxiliary properties in both meta-training and testing; on MUV, $N=12$ uses 11 in meta-training and 12 at test time.

Peak and final values often disagree. Many runs reach their peak and then degrade, in some cases below chance level. The drop is not specific to MSRHA: it also occurs without MSRHA, for example on MUV at 50\% masking, and for the SIDER ablation base, whose gate is frozen at zero. In the count sweep with $N\ge8$, however, the runs with MSRHA have the higher peak in all three settings but lose considerably more by the end of training; on Tox21 with $N=8$, the final ROC-AUC of the ablation base falls below chance level while the ablation without MSRHA stays above 90. These peak and final metrics show that stable late training remains an open issue.

\subsection{Routing Count}
\label{sec:F.3}

Table~\ref{tab:F3} lists the single-run peak ROC-AUC of the routing ablation, with other settings fixed.

\begin{table}[ht]
\caption{Peak ROC-AUC (\%) for different routing counts $k$ in the 10-shot setting.}
\label{tab:F3}
\centering
\small
\begin{tabular}{@{}lccc@{}}
\toprule
$k$ & Tox21 & MUV & CEETOX \\
\midrule
1 & 93.01 & 89.43 & 76.38 \\
5 & 99.47 & 92.32 & 91.89 \\
10 & 97.87 & 86.42 & 80.69 \\
\bottomrule
\end{tabular}
\end{table}

\subsection{Results on ToxCast Subsets}
\label{sec:F.4}

Table~\ref{tab:2} uses the same experimental settings as Table~\ref{tab:1}. The best baseline is Pre-KRGTS on NVS and ReCoG elsewhere. Daggers mark the 10-shot CLD and Tanguay entries with inactive residual paths.

\begin{table}[ht]
\caption{ROC-AUC (\%) on the nine ToxCast subsets. CalibHyper: mean over three runs. Baselines are the means reported in Tables 8 and 9 of \citet{wang2026}; method groups follow Table~\ref{tab:1}. $^\dagger$Ablation setting with binary relation targets and the MSRHA gate fixed at $\gamma=0$ (Proposition~\ref{prop:identity}).}
\label{tab:2}
\centering
\small
\setlength{\tabcolsep}{3.2pt}
\begin{tabular}{@{}l*{9}{c}@{}}
\toprule
Method & APR & ATG & BSK & CEETOX & CLD & NVS & OT & TOX21 & Tanguay \\
\midrule
\multicolumn{10}{c}{10-shot} \\
\midrule
ProtoNet & 73.58 & 59.26 & 70.15 & 66.12 & 78.12 & 65.85 & 64.90 & 68.26 & 73.61 \\
MAML & 72.66 & 62.09 & 66.42 & 64.08 & 74.57 & 66.56 & 64.07 & 68.04 & 77.12 \\
EGNN & 80.33 & 66.17 & 73.43 & 66.51 & 78.85 & 71.05 & 68.21 & 76.40 & 85.23 \\
\midrule
Pre-PAR & 86.09 & 72.72 & 82.45 & 72.12 & 83.43 & 74.94 & 71.96 & 82.81 & 88.20 \\
Pre-GS-Meta & 90.15 & 82.54 & 88.21 & 74.19 & 86.34 & 76.29 & 74.47 & 90.63 & 91.47 \\
Pre-KRGTS & 90.31 & 80.12 & 87.92 & 76.63 & 86.97 & 77.52 & 75.11 & 89.83 & 91.73 \\
Pin-Tuning & 92.78 & 83.58 & 89.49 & 75.96 & 87.70 & 76.33 & 75.56 & 90.80 & 92.25 \\
\midrule
ReCoG & 92.97 & 84.37 & 89.95 & 85.85 & 92.63 & 77.19 & 87.50 & 92.35 & 96.83 \\
Best baseline & 92.97 & 84.37 & 89.95 & 85.85 & 92.63 & 77.52 & 87.50 & 92.35 & 96.83 \\
CalibHyper & \textbf{94.51} & \textbf{86.56} & \textbf{92.90} & \textbf{90.58} & \textbf{94.30}$^\dagger$ & \textbf{79.32} & \textbf{92.71} & \textbf{94.21} & \textbf{98.96}$^\dagger$ \\
\midrule
\multicolumn{10}{c}{1-shot} \\
\midrule
ProtoNet & 57.08 & 54.92 & 53.92 & 60.25 & 66.25 & 54.87 & 63.11 & 58.27 & 58.32 \\
MAML & 64.59 & 55.45 & 60.36 & 61.02 & 66.22 & 59.84 & 62.15 & 59.52 & 60.92 \\
EGNN & 67.06 & 57.28 & 60.82 & 60.10 & 71.53 & 56.56 & 66.08 & 63.32 & 74.80 \\
\midrule
Pre-PAR & 84.69 & 70.38 & 79.89 & 66.57 & 77.83 & 72.51 & 70.41 & 80.33 & 86.64 \\
Pre-GS-Meta & 89.49 & 81.69 & 87.28 & 68.55 & 78.69 & 74.36 & 73.56 & 89.46 & 91.10 \\
Pre-KRGTS & 89.45 & 79.54 & 86.84 & 72.50 & 81.21 & 76.63 & 73.68 & 89.51 & 92.15 \\
Pin-Tuning & 87.08 & 80.67 & 83.12 & 71.91 & 79.15 & 71.29 & 73.17 & 84.27 & 90.08 \\
\midrule
ReCoG & 94.39 & 84.92 & 89.23 & 76.71 & 87.79 & 76.29 & 77.89 & 91.79 & 95.40 \\
Best baseline & 94.39 & 84.92 & 89.23 & 76.71 & 87.79 & 76.63 & 77.89 & 91.79 & 95.40 \\
CalibHyper & \textbf{95.76} & \textbf{95.70} & \textbf{93.63} & \textbf{87.61} & \textbf{97.80} & \textbf{79.20} & \textbf{89.32} & \textbf{99.03} & \textbf{99.88} \\
\bottomrule
\end{tabular}
\end{table}

\FloatBarrier
\section{Implementation Details and Computing Environment}
\label{sec:G}

\paragraph{Model.} The relation head and the adapter are built after the base encoder and the predictor are initialized, and both use LayerNorm. With binary targets, the extra relation pass of MSRHA freezes the updates of BatchNorm running statistics but still uses batch statistics and dropout. The C4RC loss is skipped when an episode has no valid triple. Top-$k$ routing is discontinuous where the selected set changes; for a fixed set, the weights of the retained edges are differentiable.

\paragraph{Complexity and overhead.} Let $d_r$ and $d_h$ be the hidden widths of the relation head and the adapter. Counting dense linear layers, scoring candidate triples costs $O(|V^m_\tau|\,N\,d\,d_r)$, and encoding and writing back the retained edges costs $O(|V^m_\tau|\min(k,N)\,d\,d_h)$, plus the cost of top-$k$ selection; by default $d_r=d$. Relation supervision enumerates the valid property pairs on support molecules and the valid auxiliary pairs, excluding the target column, in each query minibatch, so there are at most $O(|V^m_\tau|N^2)$ triples. On Tox21, the backbone and prediction components have 3.02M parameters, while C4RC and MSRHA add 588K (19.50\%), with 271K in the adapter and 317K in the four-state head.

\paragraph{Ablation settings.} Besides the full model, the ablations replace C4RC with binary relation targets (w/o C4RC), remove MSRHA, remove chance correction and shrinkage while keeping four states (w/o chance correction), use unsigned MSRHA, which merges the two signed channels (w/o sign separation), or freeze the MSRHA gate at $\gamma=0$ instead of learning it. The unsigned implementation sets the weight of the positive channel to $\lambda_a|d_{i\tau a}|$ and uses the normalizing mass $\lambda_a(|d_{i\tau a}|+\mathrm{ReLU}(-d_{i\tau a}))$, so edges with negative scores count twice in the denominator; this ablation therefore changes both the sign handling and the normalization. The implementation without chance correction regresses the one-hot state and keeps the zero-sum output. For a zero-sum prediction $\hat r$, $\|\hat r-e\|^2=\|\hat r-(e-\frac14\mathbf{1})\|^2+\frac14$, so it fits the centered one-hot vector up to a constant loss; it also removes the reliability shrinkage.

\paragraph{Gates and evaluation batches.} Auxiliary gates are used only in bottleneck-augmented training and are set to 1 otherwise. The auxiliary runs use relation dropout of 0.1, so exact swap equivariance holds only at evaluation. Benchmark queries are batched in true-label class order; the cases in Section~\ref{sec:6.5} use a fixed random order. Predictions depend on batch composition through the context graph and target-node aggregation.

\paragraph{Environment.} The experiments used eight NVIDIA A800 80GB GPUs with PyTorch 2.7.1, PyTorch Geometric 2.6.1, and Python 3.10. Additional runs used eight Tesla V100S 32GB GPUs with Python 3.9.23 and ran two jobs per GPU.

\section{Formal Definitions and Assumptions}
\label{sec:C}

\subsection{Label Matrix and Empirical Law of a Property Pair}
\label{sec:C.1}

\begin{definition}[Label matrix and observation mask]
The label matrix is $Y\in\{0,1,\star\}^{|\mathcal{M}|\times|\mathcal{T}|}$, and the observation indicator is $o_{i,p}=\1[y_{i,p}\neq\star]$. The symbol $\star$ marks an unmeasured entry, and all sums run over observed entries only.
\end{definition}

\begin{definition}[Joint observation set and empirical law of a pair]
\label{def:pair}
For an ordered pair $(p,q)$ with $p\neq q$, let
\begin{equation*}
I_{pq}=\{\,i: o_{i,p}=o_{i,q}=1\,\},\qquad n_{pq}=|I_{pq}|,
\end{equation*}
and assume $n_{pq}\ge1$. Draw $i\sim\mathrm{Unif}(I_{pq})$ and treat $Y_p:=y_{i,p}\in\{0,1\}$ and $Y_q:=y_{i,q}\in\{0,1\}$ as random variables; their joint law is the empirical law $P_{pq}$ of the pair. Unless stated otherwise, $\mathbb{E}$, $\mathrm{Var}$, and $\mathrm{Cov}$ are taken under $P_{pq}$. The conditional marginals, the observed agreement, and the chance agreement are
\begin{equation*}
\pi_{p|q}=\mathbb{E}[Y_p],\qquad \pi_{q|p}=\mathbb{E}[Y_q],\qquad
a_{pq}=P_{pq}(Y_p=Y_q),
\end{equation*}
\begin{equation*}
c_{pq}=(1-\pi_{p|q})(1-\pi_{q|p})+\pi_{p|q}\,\pi_{q|p}.
\end{equation*}
Proposition~\ref{prop:chance} abbreviates $\pi_{p|q}$ and $\pi_{q|p}$ as $\pi_p$ and $\pi_q$.
\end{definition}

\begin{definition}[Cohen's $\kappa$ and agreement beyond chance]
\label{def:kappa}
If $c_{pq}<1$,
\begin{equation*}
\kappa_{pq}=\frac{a_{pq}-c_{pq}}{1-c_{pq}},
\qquad
\Delta_{pq}:=a_{pq}-c_{pq}=(1-c_{pq})\,\kappa_{pq}.
\end{equation*}
We call $\Delta_{pq}$ the agreement beyond chance.
\end{definition}

\begin{definition}[Smoothed marginals, reliability, and independence baseline]
\label{def:smooth}
Let $n^{+}_{p|q}=|\{\,i\in I_{pq}: y_{i,p}=1\,\}|=n_{pq}\,\pi_{p|q}$. With a Laplace constant $\alpha>0$ and a shrinkage constant $n_0>0$,
\begin{equation*}
\hat\pi_{p|q}=\frac{n^{+}_{p|q}+\alpha}{n_{pq}+2\alpha},
\qquad
\rho_{pq}=\frac{n_{pq}}{n_{pq}+n_0}\in(0,1),
\end{equation*}
\begin{equation*}
b_{pq}=\Big[(1-\hat\pi_{p|q})(1-\hat\pi_{q|p}),\;
(1-\hat\pi_{p|q})\hat\pi_{q|p},\;
\hat\pi_{p|q}(1-\hat\pi_{q|p}),\;
\hat\pi_{p|q}\hat\pi_{q|p}\Big]\in\Delta^3.
\end{equation*}
The baseline built from the unsmoothed marginals $\pi_{p|q}$ and $\pi_{q|p}$ is denoted $b^\ast_{pq}$; it is the product of the marginals of $P_{pq}$. Appendix~\ref{sec:E.2} bounds the smoothing error.
\end{definition}

\begin{definition}[Four-state encoding and Walsh basis]
\label{def:walsh}
The vector $e(u,v)\in\{0,1\}^4$ is the one-hot encoding of the joint state in the order $(00,01,10,11)$, where the first digit is the label of $p$ and the second that of $q$:
\begin{equation*}
e(u,v)=\big[(1-u)(1-v),\;(1-u)v,\;u(1-v),\;uv\big].
\end{equation*}
On $\mathbb{R}^4$ we take the four vectors
\begin{equation*}
\chi_{\varnothing}=(1,1,1,1),\;\;
\chi_{p}=(-1,-1,1,1),\;\;
\chi_{q}=(-1,1,-1,1),\;\;
\chi_{pq}=\chi_p\odot\chi_q=(1,-1,-1,1).
\end{equation*}
They are pairwise orthogonal with $\|\chi\|_2^2=4$ and form the Walsh-Hadamard basis. Labels in $\pm1$ form are $s_p=1-2Y_p$ and $s_q=1-2Y_q$.
\end{definition}

\begin{definition}[C4RC target]
\label{def:target}
For $i\in I_{pq}$,
\begin{equation*}
r_{i,p,q}=\rho_{pq}\big[\,e(y_{i,p},y_{i,q})-b_{pq}\,\big]\in\mathbb{R}^4.
\end{equation*}
Replacing $b_{pq}$ in Equation~\ref{eq:target} with $b^\ast_{pq}$ gives the ideal target $r^\ast_{i,p,q}$.
\end{definition}

\begin{definition}[Swap actions, invariance, and equivariance]
\label{def:swap}
Let $P_{\mathrm{swap}}\in\mathbb{R}^{4\times4}$ be the permutation matrix that exchanges the second and third coordinates, that is, the states $01$ and $10$, and fixes the first and fourth; it is an involution, $P_{\mathrm{swap}}^2=I$. Let $\mathcal{S}(h_i,h_p,h_q)=(h_i,h_q,h_p)$ be the swap on inputs. A map $F:\mathbb{R}^{3d}\to\mathbb{R}^4$ is \emph{swap-invariant} if $F\circ\mathcal{S}=F$ and \emph{swap-equivariant} if $F\circ\mathcal{S}=P_{\mathrm{swap}}F$.
\end{definition}

\begin{definition}[Signed agreement and directional component]
\label{def:signed}
For any $r\in\mathbb{R}^4$,
\begin{equation*}
d(r)=\langle r,\chi_{pq}\rangle=(r_{00}+r_{11})-(r_{01}+r_{10}),
\qquad
\mathrm{dir}(r)=r_{01}-r_{10}=\big\langle r,\tfrac12(\chi_q-\chi_p)\big\rangle.
\end{equation*}
The signed agreement $d$ drives routing and the sign channels and is invariant under the swap; $\mathrm{dir}$ distinguishes the two disagreement states and changes sign under the swap.
\end{definition}

\begin{definition}[Ordered ternary hyperedge and roles]
A hyperedge is an ordered triple $e=(m_i,\tau,a)$ whose entries play the roles of molecule, target, and auxiliary property. The hyperedge reads all three nodes and writes to $\mathrm{wr}(e)=\{m_i,\tau\}$.
\end{definition}

\begin{definition}[Pairwise decomposable class and mixed third-order difference]
\label{def:pairwise}
A function $f:\mathcal{X}_1\times\mathcal{X}_2\times\mathcal{X}_3\to\mathbb{R}$ is \emph{pairwise decomposable} if there exist $\psi_{12}$, $\psi_{13}$, and $\psi_{23}$ such that
\begin{equation*}
f(x_1,x_2,x_3)=\psi_{12}(x_1,x_2)+\psi_{13}(x_1,x_3)+\psi_{23}(x_2,x_3);
\end{equation*}
we denote this class by $\mathcal{P}$. For a step $h$, the forward difference is $(\Delta_j^hf)(x)=f(\dots,x_j+h,\dots)-f(x)$, and the mixed third-order difference is $\Delta^{(3)}_h=\Delta_1^h\Delta_2^h\Delta_3^h$.
\end{definition}

\subsection{Assumptions}
\label{sec:C.2}

\paragraph{(A1) Meaning of expectations.} Appendices~\ref{sec:D} and~\ref{sec:E} take expectations under the empirical law $P_{pq}$ by default. The sampling and shrinkage analysis in Appendix~\ref{sec:E.2} and the analysis of evaluation scores in Appendix~\ref{sec:E.4} use the randomness defined there.

\paragraph{(A2) Superpopulation and i.i.d.\ sampling.} The sampling bound in Appendix~\ref{sec:E.2} assumes that jointly observed label pairs are independent and identically distributed, and the shrinkage analysis uses the prior and conditional noise model given there. Algebraic identities use the empirical law.

\paragraph{(A3) Exact marginals.} Statements about a zero mean under independence use the exact unsmoothed baseline $b^\ast_{pq}$; Appendix~\ref{sec:E.2} bounds the bias caused by replacing it with $b_{pq}$.

\paragraph{(A4) Nondegeneracy.} The analysis of $\kappa_{pq}$ assumes $0<\pi_{p|q},\pi_{q|p}<1$, $c_{pq}<1$, and $n_{pq}\ge1$. The case $c_{pq}=1$ means that both properties are constant with the same value; Cohen's $\kappa_{pq}$ is then undefined, while the C4RC target is still computed with the joint observation mask.

\paragraph{(A5) Inputs and randomness of the head.} The relation head reads context representations $h_i,h_p,h_q\in\mathbb{R}^d$, which may carry support and auxiliary label information. Expressive limits are stated for fixed node representations, and exact equivariance assumes a deterministic forward pass or a shared random mask.

\section{Proofs}
\label{sec:D}

We use the definitions and assumptions of Appendix~\ref{sec:C}.

\subsection{Algebraic Decomposition of the Joint State}
\label{sec:D.1}

\begin{lemma}[Walsh decomposition]
\label{lem:D.1}
With $s_u=1-2u$ and $s_v=1-2v$,
\begin{equation}
\label{eq:walsh}
e(u,v)=\tfrac14\big[\chi_{\varnothing}-s_u\chi_p-s_v\chi_q+s_us_v\chi_{pq}\big].
\end{equation}
\end{lemma}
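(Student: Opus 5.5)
The identity is a finite check in $\mathbb{R}^4$, and I would prove it by expanding in the Walsh basis of Definition~\ref{def:walsh}. Because $\chi_\varnothing,\chi_p,\chi_q,\chi_{pq}$ are pairwise orthogonal with squared norm $4$, every vector $x\in\mathbb{R}^4$ satisfies
\begin{equation*}
x=\tfrac14\big[\langle x,\chi_\varnothing\rangle\chi_\varnothing+\langle x,\chi_p\rangle\chi_p+\langle x,\chi_q\rangle\chi_q+\langle x,\chi_{pq}\rangle\chi_{pq}\big].
\end{equation*}
It therefore suffices to compute the four inner products of $e(u,v)$ with the basis vectors and compare them with the claimed coefficients $1,-s_u,-s_v,s_us_v$.

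The vector $e(u,v)$ is the indicator of the state $(u,v)$. Each inner product is thus just the entry of the corresponding basis vector at that state, and these entries can be read off directly.
\begin{itemize}
\item $\chi_\varnothing$ has every entry equal to $1$, so the first coefficient is $1$.
\item $\chi_p$ has entry $-1$ where the first digit is $0$ and $+1$ where it is $1$. This equals $2u-1=-s_u$.
\item $\chi_q$ depends on the second digit in the same way, giving $-s_v$.
\item $\chi_{pq}=\chi_p\odot\chi_q$, so its entry is the product $(-s_u)(-s_v)=s_us_v$.
\end{itemize}
Substituting these four values into the expansion gives exactly Equation~\ref{eq:walsh}.

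As a sanity check, one can instead verify the identity coordinatewise on the four states $(u,v)\in\{0,1\}^2$. For instance, at $(0,0)$ we have $s_u=s_v=1$, and the right-hand side is
\begin{equation*}
\tfrac14\big[(1,1,1,1)-(-1,-1,1,1)-(-1,1,-1,1)+(1,-1,-1,1)\big]=(1,0,0,0),
\end{equation*}
which is $e(0,0)$. The other three states follow by the same sign bookkeeping.

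There is no real obstacle here. The only step needing care is keeping the sign convention $s=1-2u$ consistent with the ordering of the entries of $\chi_p$ and $\chi_q$.
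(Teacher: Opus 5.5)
Your proof is correct and is essentially the paper's argument: compute the inner products of $e(u,v)$ with $\chi_\varnothing,\chi_p,\chi_q,\chi_{pq}$ as $1,-s_u,-s_v,s_us_v$, then use orthogonality and squared norm $4$ to read off the expansion. Your coordinatewise check at $(0,0)$ is an optional extra. The paper's proof also adds a remark about averaging under a product measure to get the baseline coefficients, which the lemma itself does not need.
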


\begin{proof}
By Definition~\ref{def:walsh}, the inner products of $e(u,v)$ with $\chi_\varnothing$, $\chi_p$, $\chi_q$, and $\chi_{pq}$ are $1$, $-s_u$, $-s_v$, and $s_us_v$. The basis is orthogonal and each vector has squared norm 4, which gives the expansion. Taking the expectation under a product measure with marginals $(\pi_1,\pi_2)$ gives the corresponding coefficients of the independence baseline, whose second-order coefficient is $(1-2\pi_1)(1-2\pi_2)$.
\end{proof}

\subsection{Proof of Proposition~\ref{prop:chance}}
\label{sec:D.2}

Write $t=\mathbb{E}[Y_pY_q]$, $\pi_1=\pi_{p|q}$, and $\pi_2=\pi_{q|p}$. Summing over the four states of two binary labels,
\begin{equation*}
a_{pq}=1-\pi_1-\pi_2+2t,\qquad
c_{pq}=1-\pi_1-\pi_2+2\pi_1\pi_2,
\end{equation*}
so that $a_{pq}-c_{pq}=2(t-\pi_1\pi_2)=2\,\mathrm{Cov}(Y_p,Y_q)$.
Since $y_{\text{rel}}=\1[Y_p\neq Y_q]$, we have $\mathbb{E}[y_{\text{rel}}]=1-a_{pq}=(1-c_{pq})(1-\kappa_{pq})$. Lemma~\ref{lem:D.1} gives Equation~\ref{eq:contrast}, and $\mathrm{Cov}(1-2Y_p,1-2Y_q)=4\,\mathrm{Cov}(Y_p,Y_q)$ gives the remaining claims. \qed

\subsection{Degeneracy of the Pair Term}
\label{sec:D.3}

\begin{proposition}[Degeneracy of the pair term]
\label{prop:degenerate}
Let the input of the relation head be $u=[h_i\Vert h_p]\odot[h_i\Vert h_q]$, where $h_i,h_p,h_q\in\mathbb{R}^d$. Then
\begin{equation}
\label{eq:pair}
u=\big[\,h_i\odot h_i\;\Vert\;h_p\odot h_q\,\big]\in\mathbb{R}^{2d}.
\end{equation}
Hence (i) the first $d$ coordinates do not depend on the property pair; (ii) the last $d$ coordinates do not depend on the molecule; (iii) $u$ is invariant under the swap $p\leftrightarrow q$; (iv) the pre-activation of the first linear layer of the head decomposes additively as $W^{(1)}(h_i\odot h_i)+W^{(2)}(h_p\odot h_q)+b$, with no explicit bilinear cross term between molecule and property; and (v) every swap-equivariant four-state output $F$ computed from $u$ satisfies $F_{01}=F_{10}$ and $\|F-r\|_2^2\ge\frac12(r_{01}-r_{10})^2$ for any ordered target $r$.
\end{proposition}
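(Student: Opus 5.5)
The plan is to read off Equation~\ref{eq:pair} directly from the definition of the Hadamard product on concatenated vectors, and then derive (i)--(v) from this form one at a time. Since $\odot$ acts coordinatewise and concatenation only stacks coordinates, we have $[a\Vert b]\odot[c\Vert d]=[a\odot c\Vert b\odot d]$ for blocks of matching sizes. Taking $a=c=h_i$, $b=h_p$ and $d=h_q$ gives $u=[h_i\odot h_i\Vert h_p\odot h_q]$. Claims (i) and (ii) are then immediate: the first block is a function of $h_i$ alone and the second of $(h_p,h_q)$ alone. Claim (iii) follows from commutativity of $\odot$, since $h_p\odot h_q=h_q\odot h_p$ gives $u\circ\mathcal{S}=u$. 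For (iv), we split the first-layer weight matrix column-wise as $W=[W^{(1)}\;W^{(2)}]$ to match the two blocks. This gives $Wu+b=W^{(1)}(h_i\odot h_i)+W^{(2)}(h_p\odot h_q)+b$. No product of a coordinate of $h_i$ with a coordinate of $h_p$ or $h_q$ appears in this expression, so there is no explicit bilinear cross term.

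Claim (v) is the substantive part. Let $F=G(u)$ for an arbitrary map $G:\mathbb{R}^{2d}\to\mathbb{R}^4$, viewed as a function of $(h_i,h_p,h_q)$. By (iii), $F\circ\mathcal{S}=G(u)=F$, so $F$ is swap-invariant. If $F$ is also swap-equivariant, then $F=F\circ\mathcal{S}=P_{\mathrm{swap}}F$. Comparing the second and third coordinates gives $F_{01}=F_{10}$.

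For the error bound, I fix any $r\in\mathbb{R}^4$ and any $F$ with $F_{01}=F_{10}=:t$. Dropping the nonnegative $00$ and $11$ terms gives $\|F-r\|_2^2\ge (t-r_{01})^2+(t-r_{10})^2$. This is a convex quadratic in $t$ minimized at $t=\tfrac12(r_{01}+r_{10})$, where its value is $2\cdot\tfrac14(r_{01}-r_{10})^2=\tfrac12(r_{01}-r_{10})^2$. Equivalently, we can decompose $(F_{01}-r_{01},F_{10}-r_{10})$ into components along $(1,1)$ and $(1,-1)$: the constraint $F_{01}=F_{10}$ fixes the antisymmetric component at $-\tfrac12(r_{01}-r_{10})$ on each coordinate. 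That yields the bound.

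The only point needing care is interpreting ``computed from $u$'' as $F$ factoring through $u$, so that swap invariance transfers from $u$ to $F$. I would state this explicitly, along with assumption (A5) that the forward pass is deterministic or uses a shared random mask, since otherwise equality under the swap holds only in distribution. It is also worth remarking that the bound is attained, by setting $F_{00}=r_{00}$, $F_{11}=r_{11}$ and $t=\tfrac12(r_{01}+r_{10})$, so the constant $\tfrac12$ is sharp.
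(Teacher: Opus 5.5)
Your proposal is correct and follows the paper's proof essentially step for step. Both arguments use the blockwise action of $\odot$ on the concatenation and the column split $[W^{(1)}\;W^{(2)}]$ of the first-layer weight. Both then use commutativity $h_p\odot h_q=h_q\odot h_p$ to get $F\circ\mathcal{S}=F$ and hence $F=P_{\mathrm{swap}}F$, and the inequality $(t-x)^2+(t-y)^2\ge\tfrac12(x-y)^2$ for the error bound. Your extra remarks on the deterministic forward pass (A5) and on the sharpness of the constant $\tfrac12$ are consistent with the paper.
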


Later nonlinear layers can still combine molecule and property features, so the additive structure in (iv) concerns only the first pre-activation; the swap invariance in (iii) carries over to every function of $u$.

\begin{proof}
The Hadamard product acts blockwise on the concatenation, which gives Equation~\ref{eq:pair}. Writing the first-layer weight as $[W^{(1)}\;W^{(2)}]$ gives the additive decomposition of the pre-activation.

Since $h_p\odot h_q=h_q\odot h_p$, the input does not change when the two properties are swapped, so every deterministic function $F$ computed from it satisfies $F\circ\mathcal{S}=F$. If the output is also required to be swap-equivariant, then $F=P_{\mathrm{swap}}F$, hence $F_{01}=F_{10}$, and for any ordered target $r$
\begin{equation*}
\|F-r\|_2^2\ge\tfrac12(r_{01}-r_{10})^2.
\end{equation*}
To see this, write $\bar F=F_{01}=F_{10}$ and use $(\bar F-x)^2+(\bar F-y)^2\ge(x-y)^2/2$; the other error terms are nonnegative. The bound continues to hold after taking the expectation over the input distribution.
\end{proof}

\subsection{Expectation of the C4RC Target}
\label{sec:D.4}

\begin{proposition}[Unbiasedness under independence]
\label{prop:unbiased}
If $p$ and $q$ are independent on the jointly observed molecules and the marginals are estimated exactly, then $\mathbb{E}[r_{i,p,q}]=\mathbf{0}$. More generally, with exact marginals and by Equation~\ref{eq:contrast}, the signed contrast of the target satisfies
\begin{equation}
\label{eq:target-contrast}
\mathbb{E}\big[(r_{00}+r_{11})-(r_{01}+r_{10})\big]
=\rho_{pq}\big(\mathbb{E}[s_p s_q]-\mathbb{E}[s_p]\,\mathbb{E}[s_q]\big)
=2\,\rho_{pq}\,(1-c_{pq})\,\kappa_{pq}.
\end{equation}
\end{proposition}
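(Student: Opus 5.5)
The plan is to work entirely under the empirical law $P_{pq}$ of Definition~\ref{def:pair}, with the exact baseline $b^\ast_{pq}$ of Definition~\ref{def:smooth} in place of $b_{pq}$, as assumption (A3) prescribes. Both claims then follow from linearity of expectation applied to the ideal target $r^\ast_{i,p,q}=\rho_{pq}[e(y_{i,p},y_{i,q})-b^\ast_{pq}]$. Since $\rho_{pq}$ and $b^\ast_{pq}$ are constants under $P_{pq}$, we get $\mathbb{E}[r^\ast]=\rho_{pq}(\mathbb{E}[e(Y_p,Y_q)]-b^\ast_{pq})$. Here $\mathbb{E}[e(Y_p,Y_q)]$ is the vector of joint state probabilities $P_{pq}(Y_p=u,Y_q=v)$ in the order $(00,01,10,11)$. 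Meanwhile $b^\ast_{pq}$ is the product of the marginals $\pi_{p|q},\pi_{q|p}$, coordinate by coordinate.

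For the first claim, independence on the jointly observed molecules means that $P_{pq}(Y_p=u,Y_q=v)=P(Y_p=u)P(Y_q=v)$ for every state. This matches $b^\ast_{pq}$ coordinatewise, so $\mathbb{E}[r^\ast]=\mathbf{0}$.

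For the contrast identity, I will take the inner product with $\chi_{pq}$ from Definition~\ref{def:walsh}. By Lemma~\ref{lem:D.1}, equivalently Equation~\ref{eq:contrast}, we have $\langle e(Y_p,Y_q),\chi_{pq}\rangle=s_ps_q$, so the first term has expectation $\mathbb{E}[s_ps_q]$. For the baseline, $b^\ast_{pq}$ is the expectation of $e$ under the product measure with the same marginals. The remark at the end of the proof of Lemma~\ref{lem:D.1} then gives $\langle b^\ast_{pq},\chi_{pq}\rangle=(1-2\pi_{p|q})(1-2\pi_{q|p})=\mathbb{E}[s_p]\,\mathbb{E}[s_q]$. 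Together these give the first equality in Equation~\ref{eq:target-contrast}.

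For the second equality, note that $\mathbb{E}[s_ps_q]-\mathbb{E}[s_p]\mathbb{E}[s_q]=\mathrm{Cov}(1-2Y_p,1-2Y_q)=4\,\mathrm{Cov}(Y_p,Y_q)$. From the proof of Proposition~\ref{prop:chance}, this equals $2(a_{pq}-c_{pq})=2\Delta_{pq}$. Definition~\ref{def:kappa} then gives $2\Delta_{pq}=2(1-c_{pq})\kappa_{pq}$, under the nondegeneracy assumption (A4) that $c_{pq}<1$.

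There is no substantive obstacle. The only care point is bookkeeping: the statement is about the ideal target $r^\ast$ rather than the smoothed $r$. I will say explicitly that ``marginals estimated exactly'' means $b_{pq}=b^\ast_{pq}$, and note that the smoothing bias is deferred to Appendix~\ref{sec:E.2}. I will also observe that $\rho_{pq}$, being a deterministic function of $n_{pq}$, factors out of every expectation.
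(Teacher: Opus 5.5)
Your proof is correct and follows the paper's argument in substance. You replace $b_{pq}$ by $b^\ast_{pq}$, pull $\rho_{pq}$ and the baseline out of the expectation under $P_{pq}$, pair with $\chi_{pq}$, and convert $\mathrm{Cov}(s_p,s_q)=4\,\mathrm{Cov}(Y_p,Y_q)=2(a_{pq}-c_{pq})$ via Proposition~\ref{prop:chance}. The difference is only packaging: the paper first records the vector identity $\mathbb{E}[r^\ast]=\rho_{pq}\,\mathrm{Cov}(Y_p,Y_q)\,\chi_{pq}$, since the joint law minus the product law is $(t-\pi_1\pi_2)\chi_{pq}$, and reads both claims off it (which also shows the mean residual has no component along $\chi_\varnothing$, $\chi_p$, or $\chi_q$), whereas you verify the two claims separately.
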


\begin{proof}
With exact joint-observation marginals, the four-state joint law differs from the product measure by $(t-\pi_1\pi_2)(1,-1,-1,1)$, so
\begin{equation*}
\mathbb{E}[r^\ast]=\rho_{pq}\,\mathrm{Cov}(Y_p,Y_q)\,\chi_{pq}.
\end{equation*}
Under independence the covariance vanishes, which gives unbiasedness. Taking the inner product with $\chi_{pq}$, whose squared norm is 4, gives Equation~\ref{eq:target-contrast}. The entries of the one-hot vector and of the baseline each sum to one, so the residual sums to zero at every point, also for the smoothed baseline.

By Lemma~\ref{lem:D.1}, the inner products of the ideal residual with $\chi_p$ and $\chi_q$ are $2\rho_{pq}(Y_p-\pi_1)$ and $2\rho_{pq}(Y_q-\pi_2)$; they vary across molecules and have zero mean. The binary target $y_{\text{rel}}=(1-s_ps_q)/2$, in contrast, is an affine function of the second-order coefficient alone.
\end{proof}

Appendix~\ref{sec:E.2} gives the finite-sample shift of the smoothed estimate.

\subsection{Swap Equivariance of the Relation Head}
\label{sec:D.5}

\begin{proposition}[Exact swap equivariance of a deterministic head]
\label{prop:equivariance}
For fixed deterministic maps $f$ and $g$, every $i,p,q$, and every parameter value, $\hat r_{iqp}=P_{\text{swap}}\hat r_{ipq}$, where $P_{\text{swap}}$ exchanges the coordinates $01$ and $10$ and fixes $00$ and $11$. In addition, $\sum_k\hat r_{ipq}[k]=0$.
\end{proposition}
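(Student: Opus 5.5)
The plan is to track each ingredient of the head under the input swap $\mathcal{S}(h_i,h_p,h_q)=(h_i,h_q,h_p)$, then assemble the output coordinates through Equation~\ref{eq:split} and the centering step. The first step is to check that the feature vector is swap-invariant, $\xi_{iqp}=\xi_{ipq}$. Each of its three blocks is symmetric in $(h_p,h_q)$: $h_i\odot(h_p+h_q)$ because addition commutes, $h_p\odot h_q$ because the Hadamard product commutes, and $|h_p-h_q|$ because $|x|=|-x|$ coordinatewise. Since $f$ is a fixed deterministic map, it follows that $f(\xi_{iqp})=f(\xi_{ipq})$. Hence the three outputs $\hat r_{00}$, $\hat r_{11}$ and $u_c$ are identical for $(i,p,q)$ and $(i,q,p)$.

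The second step is the antisymmetric scalar. Because $g$ is deterministic, the definition gives directly
\[
\delta_{iqp}=g(h_i\odot h_q)-g(h_i\odot h_p)=-\delta_{ipq}.
\]
Substituting into Equation~\ref{eq:split} yields $\hat r_{01}(i,q,p)=\tfrac12(u_c-\delta_{ipq})=\hat r_{10}(i,p,q)$, and symmetrically $\hat r_{10}(i,q,p)=\hat r_{01}(i,p,q)$. So the uncentered vector satisfies $\tilde r_{iqp}=P_{\mathrm{swap}}\tilde r_{ipq}$.

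The third step handles the centering. Write it as the linear projection $C=I-\tfrac14\mathbf{1}\mathbf{1}^\top$. Because $P_{\mathrm{swap}}$ is a permutation matrix, it fixes $\mathbf{1}$ and preserves the coordinate sum, so $C$ commutes with $P_{\mathrm{swap}}$. Therefore
\[
\hat r_{iqp}=C\tilde r_{iqp}=CP_{\mathrm{swap}}\tilde r_{ipq}=P_{\mathrm{swap}}C\tilde r_{ipq}=P_{\mathrm{swap}}\hat r_{ipq}.
\]
The zero-sum claim is immediate from the definition of $C$, since $\mathbf{1}^\top C=\mathbf{1}^\top-\mathbf{1}^\top=0$. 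In other words, the output lies in the subspace $\Pi_0$.

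No step is a real obstacle; the only point needing care is the scope of the hypotheses. Dropout or any other stochastic layer must either be inactive or use a shared random mask for both orderings, as assumption (A5) states. Otherwise $f$ and $g$ are not fixed maps and the identities above only hold in distribution. I would state this explicitly, noting that the conclusion holds for every parameter value because nothing in the argument uses specific weights.
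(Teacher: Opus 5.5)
Your proposal is correct and follows essentially the same route as the paper's proof. Swap invariance of $\xi_{ipq}$ and antisymmetry of $\delta_{ipq}$ make Equation~\ref{eq:split} exchange only the $01$ and $10$ coordinates, and the centering matrix $I-\tfrac14\mathbf{1}\mathbf{1}^\top$ commutes with the permutation $P_{\mathrm{swap}}$ and forces the zero sum; you simply spell out each step in more detail (the paper's extra remark that the target $r_{i,p,q}$ transforms the same way is not needed for the statement).
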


\begin{proof}
With a deterministic forward pass or a shared mask, $\xi_{iqp}=\xi_{ipq}$ and $\delta_{iqp}=-\delta_{ipq}$, so Equation~\ref{eq:split} only exchanges the coordinates $01$ and $10$. The centering matrix $I-\mathbf{1}\mathbf{1}^\top/4$ commutes with every permutation of coordinates, so it preserves equivariance and makes the output sum to zero. The target transforms in the same way: $e(y_q,y_p)=P_{\mathrm{swap}}e(y_p,y_q)$, $b_{qp}=P_{\mathrm{swap}}b_{pq}$, and $\rho_{qp}=\rho_{pq}$.
\end{proof}

The output assembly covers the zero-sum subspace $\Pi_0=\{x\in\mathbb{R}^4:\sum_kx_k=0\}$: for any $x\in\Pi_0$, take $(\hat r_{00},\hat r_{11},u_c,\delta)=(x_{00},x_{11},x_{01}+x_{10},x_{01}-x_{10})$. The symmetric features can collide, however. The property pairs $((1,3),(2,4))$ and $((1,4),(2,3))$ have the same sum, elementwise product, and absolute difference, and therefore the same symmetric features for every $h_i$. The two pairs are not related by a swap, yet the symmetric channel cannot tell them apart.

\subsection{Identity at Initialization}
\label{sec:D.6}

\begin{proposition}[Identity at initialization]
\label{prop:identity}
Fix the existing parameters, node representations, and random state, and assume that the messages and their derivatives are finite. When $\gamma=0$, $H'=H$, and the input and output of the predictor, as well as the gradient of the current prediction loss with respect to the existing parameters, coincide with those of the same model with the adapter disabled.
\end{proposition}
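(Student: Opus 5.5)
The argument is a direct forward-pass comparison followed by a chain-rule computation. Both are carried out with the existing parameters, node representations, and random state held fixed, so the two models see identical context-encoder outputs $H$, gates $\lambda_a$, and routing sets $\mathcal{A}_{i\tau}$.

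\emph{Forward pass.} Since $\gamma=0$ and every message $\Delta h_v$ is finite, the update $h'_v=h_v+\gamma\,\Delta h_v$ gives $h'_v=h_v$ exactly for each updated molecule and target node. Auxiliary nodes are never updated, so $H'=H$. The predictor reads only $[\,h'_i\Vert h'_\tau\,]$, so its input and output, and hence the current prediction loss, equal those of the model with the adapter disabled. Finiteness matters here: LayerNorm inside $z_e$ and the normalization by $Z_v=\varepsilon+\sum_e(w_e^++w_e^-)$ with $\varepsilon>0$ keep $\Delta h_v$ finite even when all weights vanish, so the product $0\cdot\Delta h_v$ cannot become an undefined $0\cdot\infty$.

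\emph{Gradients.} Let $\theta$ be any existing parameter. By the chain rule through the residual,
\begin{equation*}
\frac{\partial h'_v}{\partial\theta}=\frac{\partial h_v}{\partial\theta}+\gamma\,\frac{\partial\,\Delta h_v}{\partial\theta}.
\end{equation*}
At $\gamma=0$ the second term vanishes, because the derivatives of the messages are assumed finite. This covers every path by which $\theta$ influences $\Delta h_v$: through $h_i$, $h_\tau$, $h_a$, through the relation head output $\hat r_{i\tau a}$ and the signed weights $w_e^\pm$, and through $\lambda_a$. Each such path passes through $\Delta h_v$ and is therefore multiplied by $\gamma$. Consequently $\partial\mathcal{L}_{\text{pred}}/\partial\theta=\sum_v(\partial\mathcal{L}/\partial h'_v)(\partial h_v/\partial\theta)$. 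This equals the no-adapter gradient, because $\partial\mathcal{L}/\partial h'_v$ is evaluated at $H'=H$ and the predictor is shared. The derivatives of the adapter-specific parameters ($W_m,W_t,W_a,W_r,T^{\pm}$) of the prediction loss vanish for the same reason. For $\gamma$ itself, $\partial\mathcal{L}/\partial\gamma=\sum_v\langle\partial\mathcal{L}/\partial h'_v,\Delta h_v\rangle$, which is generally nonzero; this justifies the remark that $\gamma$ still receives a gradient.

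The main obstacle is the non-smooth Top-$k$ selection, together with the exact meaning of ``the same model with the adapter disabled.'' For the first, I would argue that for the fixed random state the selected set $\mathcal{A}_{i\tau}$ is a fixed index set. Within that set the retained edge weights are differentiable (Appendix~\ref{sec:G}), and the selection itself only affects quantities multiplied by $\gamma$, so any non-differentiability where the set changes is killed by the factor $\gamma=0$. This holds under the convention that the selection is treated as piecewise constant. For the second, I need the extra relation pass to leave the backbone untouched: it must not update BatchNorm running statistics and must not consume random draws that change the backbone's dropout masks. That is covered by the hypothesis ``fix the random state,'' which I would state explicitly. Finally, the claim concerns only the prediction loss; the auxiliary C4RC loss does contribute its own gradients to shared parameters. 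I would note this so that the statement is not read as covering $\mathcal{L}_{\text{outer}}$.
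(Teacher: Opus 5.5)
Your proposal is correct and follows the paper's proof essentially step for step. Both arguments rest on the same four facts at $\gamma=0$: $H'=H+\gamma\Delta H=H$; the chain rule gives $\partial H'/\partial\theta=\partial H/\partial\theta$; the prediction-loss gradient of the adapter parameters vanishes; and $\partial\mathcal{L}/\partial\gamma=\langle\nabla_{H'}\mathcal{L},\Delta H\rangle$. Your extra remarks on Top-$k$, the extra relation pass, and the restriction to the prediction loss are consistent with the paper's hypotheses, though the paper makes a different point about LayerNorm than you do: it must sit inside the message rather than on the residual sum, since otherwise a zero gate would give $H'=\mathrm{LN}(H)$.
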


\begin{proof}
Fix the existing parameters $\theta$, the adapter parameters $\theta_{\mathrm{ad}}$, the node representations, and the random state. Let $\mathcal{L}$ be the prediction loss of the current forward pass, and assume that the messages and the derivatives used are finite. At $\gamma=0$ we have $H'=H+\gamma\Delta H=H$ and
\begin{equation*}
\left.\frac{\partial H'}{\partial\theta}\right|_{\gamma=0}=\frac{\partial H}{\partial\theta},
\qquad
\left.\frac{\partial\mathcal{L}}{\partial\gamma}\right|_{\gamma=0}=\langle\nabla_{H'}\mathcal{L},\Delta H\rangle,
\qquad
\left.\frac{\partial\mathcal{L}}{\partial\theta_{\mathrm{ad}}}\right|_{\gamma=0}=0.
\end{equation*}
The predictor reads the adapter output only through $H'$, so at a zero gate its input, its output, and the prediction loss equal those with the path disabled. With the derivatives above and the chain rule, the prediction gradients of the existing parameters also coincide.
\end{proof}

The identity requires normalization inside the message: if LayerNorm were applied to the residual sum, a zero gate would give $H'=\mathrm{LN}(H)$.

\section{Further Analysis and Scope}
\label{sec:E}

\subsection{Marginal Component of the Binary Target}
\label{sec:E.1}

\paragraph{Risk of constant regression.} For a fixed property pair, let $\mu=\mathbb{E}[y_{\text{rel}}]=(1-c_{pq})(1-\kappa_{pq})$. A constant prediction $v$ has mean squared error $\mu(1-\mu)+(v-\mu)^2$. The prediction $v=1-c_{pq}$, which uses only the marginals, has excess risk $(1-c_{pq})^2\kappa_{pq}^2$ over the best constant, and with fixed marginals the derivative of the target mean with respect to $\kappa_{pq}$ is $-(1-c_{pq})$.

\paragraph{Marginal constraints.} Let $\pi_1=\pi_{p|q}$ and $\pi_2=\pi_{q|p}$. Nonnegativity of the four state probabilities gives $t=P(Y_p=1,Y_q=1)\in[\max(0,\pi_1+\pi_2-1),\min(\pi_1,\pi_2)]$, and substituting into $\kappa_{pq}=2(t-\pi_1\pi_2)/(\pi_1+\pi_2-2\pi_1\pi_2)$ gives the attainable range. For nondegenerate equal marginals $\pi_1=\pi_2=\pi\le1/2$, $\kappa_{pq}\in[-\pi/(1-\pi),1]$, so rare positives leave little room for negative $\kappa$; unequal marginals can also restrict the upper end.

\paragraph{Mean binary score.} Map a binary prediction $\hat y_{\text{rel}}$ to the signed score $1-2\hat y_{\text{rel}}$. The best constant prediction under MSE, or the population average of the best conditional-mean prediction under the same distribution, then gives
\begin{equation*}
\bar d^{\,\mathrm{bin}}_{pq}=1-2\,\mathbb{E}[y_{\text{rel}}]=(2c_{pq}-1)+2(1-c_{pq})\,\kappa_{pq}.
\end{equation*}
The mean score contains the marginal term $2c_{pq}-1$, which is absent from Equation~\ref{eq:target-contrast}.

\subsection{Finite Observations, Shrinkage, and Smoothing}
\label{sec:E.2}

\paragraph{Sampling error.} Under the i.i.d.\ assumption (A2), the observed agreement is a mean of Bernoulli indicators. Hoeffding's inequality \citep{hoeffding1963} gives, with probability at least $1-\zeta$,
\begin{equation*}
|\hat a_{pq}-a^{\mathrm{pop}}_{pq}|\le\sqrt{\frac{\ln(2/\zeta)}{2n_{pq}}},
\end{equation*}
where $a^{\mathrm{pop}}_{pq}$ is the agreement rate of the jointly observed population.

\paragraph{Linear shrinkage.} Suppose that the signal $\Delta$ across property pairs satisfies $\mathbb{E}\Delta=0$ and $\mathrm{Var}(\Delta)=\omega^2>0$, and that its estimate satisfies $\mathbb{E}[\hat\Delta\mid\Delta]=\Delta$ and $\mathrm{Var}(\hat\Delta\mid\Delta)=\sigma^2/n$ with $n,\sigma^2>0$. For the shrunk estimate $\rho\hat\Delta$, expanding the square and using conditional unbiasedness gives
\begin{equation*}
\mathrm{MSE}(\rho)=(1-\rho)^2\omega^2+\rho^2\sigma^2/n,
\qquad
\rho^\star=\frac{n}{n+\sigma^2/\omega^2}.
\end{equation*}
The quadratic is strictly convex, and setting its derivative to zero gives the optimal shrinkage constant $\sigma^2/\omega^2$. In the experiments, $n_0$ is treated as a hyperparameter.

\paragraph{Smoothing shift.} Let $n=n_{pq}$. Subtracting the two estimates in Definition~\ref{def:smooth} gives $\hat\pi-\pi=\alpha(1-2\pi)/(n+2\alpha)$. Each coordinate of the baseline is a product of two factors in $[0,1]$, so the triangle inequality for differences of products gives
\begin{equation*}
\|b_{pq}-b^\ast_{pq}\|_\infty\le\frac{2\alpha}{n+2\alpha}.
\end{equation*}
Since $\mathbb{E}[r]=\mathbb{E}[r^\ast]+\rho(b^\ast-b)$, under independence $\|\mathbb{E}[r]\|_\infty\le2\alpha\rho/(n+2\alpha)$. For the signed contrast, $d(b^\ast)=(1-2\pi_1)(1-2\pi_2)$ and the coefficient of each marginal shift is at most 2 in absolute value, so
\begin{equation*}
\Big|\mathbb{E}[d(r)]-2\rho_{pq}(1-c_{pq})\kappa_{pq}\Big|\le\frac{4\alpha\rho_{pq}}{n_{pq}+2\alpha}.
\end{equation*}
These bounds describe how far the smoothed baseline moves from the exact empirical marginals; shrinkage scales the signal and this shift by the same factor.

\subsection{Expressiveness of the Hyperedge Encoder}
\label{sec:E.3}

\paragraph{Normalization with fixed encodings.} Fix the retained edges and their encodings, and write $U_v$ for the numerator in Equation~\ref{eq:aggregate} and $D_v>0$ for the sum of weights. If the scores or gates are multiplied by a common factor $\nu>0$, positive homogeneity of ReLU gives $\Delta h_v(\nu)=\nu U_v/(\varepsilon+\nu D_v)$. For $\varepsilon=0$ the common scale cancels; for $\varepsilon>0$ the distance from $U_v/D_v$ is $\varepsilon\|U_v/D_v\|/(\varepsilon+\nu D_v)$.

\paragraph{Joint ternary encoding.} On the same input representations, a model that adds single-layer pairwise functions belongs to the class $\mathcal{P}$ of Definition~\ref{def:pairwise} and satisfies $\Delta_1^h\Delta_2^h\Delta_3^hf=0$, because each term misses one variable. The joint encoder can produce a nonzero mixed difference. Take scalar inputs, $W_m=W_t=W_a=1$, and $W_r=b=0$, and omit LayerNorm, so that $z=\mathrm{ReLU}(h_m+h_t+h_a)$. When the three inputs sum to $-2$ and the step is 1,
\begin{equation*}
\Delta^{(3)}_1z=\mathrm{ReLU}(1)-3\,\mathrm{ReLU}(0)+3\,\mathrm{ReLU}(-1)-\mathrm{ReLU}(-2)=1.
\end{equation*}
With LayerNorm, take the two-dimensional pre-activation $(t,-t)$ with $t=h_m+h_t+h_a$, unit gain, zero bias, and $\epsilon_{\mathrm{LN}}>0$. After ReLU, the first coordinate is $\max(t,0)/\sqrt{t^2+\epsilon_{\mathrm{LN}}}$, and the same difference equals $1/\sqrt{1+\epsilon_{\mathrm{LN}}}\neq0$. The class of joint encoders is therefore not contained in $\mathcal{P}$.

\subsection{Label Use and Mean Signed Scores}
\label{sec:E.4}

\paragraph{Labels used in a fixed forward pass.} Fix the episode index, the query batch, the parameters, the support and auxiliary labels, the observation mask, and the random state. The statistics read the meta-training columns, relation supervision on the target and the adapter use support labels of the target, and query graph construction and relation triples read the auxiliary columns. Replacing the values of the query target labels therefore leaves the prediction of this forward pass unchanged.

\paragraph{Mean signed scores.} For auxiliary pairs that use the exact marginals of the same joint-observation distribution, the mean ideal score is $2\rho_{pq}(1-c_{pq})\kappa_{pq}$ (Appendix~\ref{sec:D.4}), so scores of different property pairs are additionally scaled by $\rho_{pq}(1-c_{pq})$.

Pairs of the target and an auxiliary property use the baseline conditioned on support labels, and their mean has to be computed separately. Let the distribution under consideration have marginals $\mu_\tau=\mathbb{E}[Y_\tau]$ and $\mu_a=\mathbb{E}[Y_a]$, and let the auxiliary marginal of the baseline be a fixed $\pi_a^0$. At every point $d(r_{\tau a})=2\rho_{\tau a}(2Y_\tau-1)(Y_a-\pi_a^0)$, and taking the expectation and expanding the covariance gives
\begin{equation*}
\mathbb{E}[d(r_{\tau a})]
=4\rho_{\tau a}\,\mathrm{Cov}(Y_\tau,Y_a)
+2\rho_{\tau a}(2\mu_\tau-1)(\mu_a-\pi_a^0).
\end{equation*}
The second term vanishes when the auxiliary marginal matches $\pi_a^0$ or when the target labels are balanced.

\section{Label Statistics Behind the Relation Model}
\label{sec:H}

Figure~\ref{fig:2}(a, b) averages one statistic per property pair, with equal weight on every pair that has joint observations; when the agreement expected under independence equals 1, we set $\kappa=0$. Panels (c) to (f), and panel D of Figure~\ref{fig:1}, are the four cases in this section. Each case is one limit of supervising relations with binary agreement.

\subsection{Chance Agreement under Sparse Labels: MUV-713 and MUV-733}
\label{sec:H.1}

MUV is derived from PubChem bioactivity data \citep{rohrer2009}. MUV-713 and MUV-733 are screens for inhibitors of coactivator binding to estrogen receptors $\alpha$ and $\beta$ (PubChem AIDs 713 and 733), and 3{,}121 molecules are observed for both in our label matrix. Table~\ref{tab:H1} gives the four joint states. The observed agreement is 99.8718\%, against 99.8719\% under independence, so $\kappa=-0.0005$. The marginals already account for it.

\begin{table}[H]
\caption{Joint label states of MUV-713 and MUV-733.}
\label{tab:H1}
\centering
\small
\begin{tabular}{@{}lcc@{}}
\toprule
State & Observed count & Expected under independence \\
\midrule
Both negative & 3117 & 3117.0 \\
Only MUV-733 positive & 1 & 1.00 \\
Only MUV-713 positive & 3 & 3.00 \\
Both positive & 0 & 0.001 \\
\bottomrule
\end{tabular}
\end{table}

\subsection{Direction of Disagreement: Increase and Decrease of the Same Readout}
\label{sec:H.2}

In the CEETOX panel of ToxCast, the assays OHPROG\_up and OHPROG\_dn (both with the prefix CEETOX\_H295R\_) record increases and decreases of the 17$\alpha$-hydroxyprogesterone readout. The 500 jointly observed molecules give $\kappa=-0.3628$ (Table~\ref{tab:H2}). A binary disagreement label adds the 151 molecules with only a decrease to the 119 with only an increase. The two states are the two directions of the same readout, and the four-state target keeps them apart.

\begin{table}[H]
\caption{Joint label states of the OHPROG increase (up) and decrease (dn) assays.}
\label{tab:H2}
\centering
\small
\begin{tabular}{@{}lcc@{}}
\toprule
 & dn negative & dn positive \\
\midrule
up negative & 230 & 151 \\
up positive & 119 & 0 \\
\bottomrule
\end{tabular}
\end{table}

\subsection{Molecule-Dependent Association: NR-AR and SR-p53 in Tox21}
\label{sec:H.3}

Of the 7{,}823 molecules in Tox21, 243 (3.1\%) have a phosphorus atom in their SMILES. Table~\ref{tab:H3} splits the jointly observed molecules of NR-AR and SR-p53 by that atom. The association is close to zero on the full set and on the molecules without phosphorus, and an order of magnitude larger on the 213 molecules that contain it. An episode-level weight for this property pair would have to use one of these three values for every molecule.

\begin{table}[H]
\caption{Cohen's $\kappa$ between NR-AR and SR-p53 on subsets of Tox21.}
\label{tab:H3}
\centering
\small
\begin{tabular}{@{}lcc@{}}
\toprule
Molecules & Joint observations & Cohen's $\kappa$ \\
\midrule
All jointly observed & 6657 & $+0.018$ \\
Containing phosphorus & 213 & $+0.240$ \\
Other & 6444 & $+0.013$ \\
\bottomrule
\end{tabular}
\end{table}

\subsection{Support from Joint Observations and the Effect of a Single Label}
\label{sec:H.4}

Table~\ref{tab:H4} relabels one joint positive $(1,1)$ as $(1,0)$ on two pairs with $\hat\kappa\approx 0.22$. The MUV pair has counts $[3058,4,3,1]$, so its estimate rests on a single joint positive and falls to $-0.0013$ after relabeling. The PCBA pair has 1{,}151 joint positives among 204{,}894 molecules, and $\hat\kappa$ moves by $1.9\times 10^{-4}$. The two sensitivities differ by a factor of about 1{,}200 (Figure~\ref{fig:2}f). Cohen's $\kappa$ divides by $1-c_{pq}$, which is 0.0029 for the MUV pair. The C4RC target does not: the mean of its signed contrast is $2\rho_{pq}(1-c_{pq})\kappa_{pq}=2\rho_{pq}(a_{pq}-c_{pq})$ (Equation~\ref{eq:scores}), and one relabeled molecule changes $a_{pq}-c_{pq}$ by at most $2/n_{pq}$. The sensitivity of the target therefore follows the number of joint observations, and $\rho_{pq}$ shrinks the target further when that number is small.

\begin{table}[H]
\caption{Effect of relabeling one joint positive as $(1,0)$.}
\label{tab:H4}
\centering
\small
\setlength{\tabcolsep}{4pt}
\begin{tabular}{@{}lccccc@{}}
\toprule
Property pair & Joint observations & Joint positives & Original $\hat\kappa$ & After relabeling & $|\Delta\hat\kappa|$ \\
\midrule
MUV-652, MUV-712 & 3066 & 1 & $+0.2211$ & $-0.0013$ & $2.2\times10^{-1}$ \\
PCBA-1460, PCBA-485364 & 204894 & 1151 & $+0.2231$ & $+0.2230$ & $1.9\times10^{-4}$ \\
\bottomrule
\end{tabular}
\end{table}

A binary target fails each case. It counts the MUV agreement as dependence, it adds the two CEETOX states together, it gives NR-AR and SR-p53 one weight on every molecule, and it has no term that depends on how many joint observations support a relation.

\end{document}

%% file: math_commands.tex
\usepackage{amsmath,amsfonts,bm}

\def\eqref#1{equation~\ref{#1}}

\def\1{\bm{1}}

\DeclareMathAlphabet{\mathsfit}{\encodingdefault}{\sfdefault}{m}{sl}
\SetMathAlphabet{\mathsfit}{bold}{\encodingdefault}{\sfdefault}{bx}{n}